\title{Generalization in Deep Networks: \\ The Role of Distance from Initialization}
\author{
  Vaishnavh Nagarajan \\
  Computer Science Department\\
  Carnegie-Mellon University\\
  Pittsburgh, PA 15213 \\
  \texttt{vaishnavh@cs.cmu.edu}\\
\And
  J. Zico Kolter\\
  Computer Science Department\\
  Carnegie-Mellon University\\
  Pittsburgh, PA 15213 \\
  \texttt{zkolter@cs.cmu.edu}
}
\renewcommand{\vec}[1]{\boldsymbol{\mathbf{#1}}}
\newtheorem{theorem}{Theorem}[section]
\newtheorem{corollary}{Corollary}[theorem]
\newtheorem{definition}{Definition}[section]
\begin{document}

\maketitle

\begin{abstract}
Why does training deep neural networks using stochastic gradient descent (SGD)  result in a generalization error that does not worsen with the number of parameters in the network? 
To answer this question, we advocate 
a notion of effective model capacity that is dependent  on {\em a given random initialization of the network} and not just the training algorithm and the data distribution. We provide empirical evidences that demonstrate that the model capacity of SGD-trained deep networks is in fact restricted through implicit regularization of {\em the $\ell_2$ distance from the initialization}.  We also provide  theoretical arguments that further highlight the need for initialization-dependent notions of model capacity. We leave as open questions how and why distance from initialization is regularized, and whether it is sufficient to explain generalization.
\end{abstract}

\section{Introduction}
\label{sec:intro}

 \cite{neyshabur15iclr} and \cite{zhang17} proposed an important open question in deep learning that has been actively studied over the last year. The question stems from the observation that
an overparametrized deep model that can be trained to fit completely random labels, can also be trained to fit real-world data while achieving low test error. This contradicts the conventional understanding in machine learning about the tradeoff between the expressivity of a model and its ability to generalize.

In order to understand this phenomenon, it is now well-known that it is necessary to examine the capacity of the model in conjunction with the training algorithm, namely stochastic gradient descent (SGD). This can be studied from different perspectives, such as stability of the training algorithm \citep{hardt16} or through robustness of the solution found \citep{xu12,keskar17,dinh17}. 
Another direction is to identify some kind of {\em implicit regularization} that is performed by SGD \citep{neyshabur17}: are the parameters recovered by SGD restricted in a manner that can explain generalization? 
We study this question 
 specifically highlighting the need to examine the model capacity after fixing the random initialization of the network (Section~\ref{sec:problem}) as it was first done in the PAC-Bayesian bounds of \cite{dziugaite17}. We first provide empirical arguments showing that the distance of the learned network from its initialization -- a term that was  considered in \cite{dziugaite17} -- is implicitly regularized by SGD to a {width-independent} value (Section~\ref{sec:experiments}). Then, in Sections~\ref{sec:linear-network} and ~\ref{sec:bad-norms}, we provide theoretical arguments highlighting the role of an initialization-dependent model capacity -- such as the distance from initialization -- in explaining generalization.

\section{Notations}
\label{sec:notations}
We consider a feed-forward network of $d$ layers and $H$ hidden units in each layer, that maps from $\mathbb{R}^{n}$.   We denote the parameters by $\mathcal{W} = \{   \vec{W}_1, \vec{W}_2, \hdots, \vec{W}_d \}$ and $\mathcal{B} = \{ \vec{b}_1, \vec{b}_2, \hdots, \vec{b}_d \}$ so that the function computed by the network can be denoted as $f_{(\mathcal{W},\mathcal{B})}(\vec{x}) =  \vec{W}_d \phi \left( \vec{W}_{d-1} \phi(\hdots \phi(\vec{W}_1\vec{x} + \vec{b}_1) \hdots ) +  \vec{b}_{d-1} \right) +\vec{b}_d $ for all $\vec{x} \in \mathbb{R}^n$.  Note that we will consider only a single output network for the sake of our discussion, and thus 
 $\vec{W}_d$ is only a column matrix and $\vec{b}_d$ is a scalar value.  We will denote the output of the $k$th layer before applying the activation function as $f^{(k)}_{(\mathcal{W},\mathcal{B})}(\vec{x}) = \vec{W}_k \phi(f^{(k-1)}_{(\mathcal{W},\mathcal{B})}(\vec{x})) + \vec{b}_k$, with $f^{(0)}_{(\mathcal{W},\mathcal{B})}(\vec{x})=\vec{x}$.

Here, $\phi$ is the activation function, which we will assume to be ReLU. Furthermore, we will assume that the weights are initialized as $(\mathcal{Z}, \mathcal{C})$ where $\mathcal{Z}$ is initialized according to Xavier initialization and $\mathcal{C} = 0$.
We will focus on networks of depth greater than $2$, which would mean that each parameter in $\mathcal{Z}$ is drawn independently from $\mathcal{N}(0,{\Theta}(1/\sqrt{H}))$. 

We will use $\| \cdot \|_F$ to denote the Frobenius norm
and  $\| \cdot \|_2$  the spectral norm. For a vector, we will use $\| \cdot \|$ to denote its $\ell_2$ norm. Furthermore, we will use the notation $\|(\mathcal{W}, \mathcal{B}) - (\mathcal{Z}, \mathcal{C}) \|_F $ to denote $\sqrt{\sum_{k=1}^{d} \|\vec{W}_k - \vec{Z}_k \|_{F}^2 +  \|\vec{b}_k - \vec{c}_k \|^2 }$.  We use a tilde symbol over asymptotic notations like, $\tilde{O}$ and $\tilde{\Omega}$, to ignore logarithmic factors in the bounds.

\section{Problem Formulation}
\label{sec:problem}

We first formally define a notion of effective model capacity based on which we will study implicit regularization and generalization. Let $\mathcal{D}$ be a distribution over data in $\mathbb{R}^{n} \times [0,1]$. Let $\mathcal{L}: \mathbb{R}^n \times \mathbb{R} \to [0,1]$ be a loss function.  Given $m$ datapoints, our goal is to explain why training a network of $H$ hidden units per layer using SGD to zero training loss, 
 results in a generalization error dependent only on $m$ and independent of $H$.  In order to do this, we argue (through empirical and theoretical evidences) that the effective capacity of the model must be take into account not just the training algorithm and the underlying distribution but also {\em the given random initialization}:

\begin{definition}  \label{def:effective-capacity} For a particular model (i.e., network architecture), we define the   {\em \bf effective capacity}  $\mathcal{H}_{m,\delta}[\mathcal{D}, (\mathcal{Z}, \mathcal{C}), \mathcal{A}]$ of a (distribution, initialization, algorithm)-tuple to be a set of parameter configurations such that with high probability $1-\delta$ over the draws of a dataset of $m$ samples from the distribution $\mathcal{D}$, training the network initialized with $ (\mathcal{Z}, \mathcal{C})$,  to zero loss on these samples using algorithm \footnote{If $\mathcal{A}$ is stochastic, we could either incorporate it in the ``high probability'' component of the definition, or we could ``freeze'' it by including it as an argument to $\mathcal{H}$ like the random initialization.} $\mathcal{A}$, obtains a parameter configuration that lies in the set  $\mathcal{H}_{m,\delta}[\mathcal{D}, (\mathcal{Z}, \mathcal{C}), \mathcal{A}]$. 
\end{definition}

Note that this notion of effective capacity reflects the idea of \cite{dziugaite17} who incorporated the initialization into their analysis by arguing that it will take into account the symmetries of the network. On the other hand, this notion is
more refined than the one in \cite{arpit17}  which is independent of both the data distribution and the (random) initializations. Similarly, \cite{neyshabur17} consider an indirect notion of effective capacity by evaluating a norm (such as the $\ell_2$ norm) for each parameter configuration and investigating whether the algorithm restricts itself to configurations of low norm; however, these norms are calculated independent of the random initialization.

Our goal now is to identify as precise a characterization of $\mathcal{H}_{m,\delta}[\mathcal{D}, (\mathcal{Z}, \mathcal{C}), \mathcal{A}]$ as possible, hopefully one that has a learning-theoretic complexity independent of the network width $H$. This would then allow one to derive $H$-independent generalization guarantees
 (see Lemma~\ref{lem:rademacher}).
Effectively, instead of seeking a `global' quantity regularized across all initializations (such as the $\ell_2$ norm in \cite{neyshabur17}), we seek one that is specific to the initialization.

The focus of our paper is arguably the simplest such quantity, one that was originally considered  in \cite{dziugaite17}: the distance of the weights $(\mathcal{W}, \mathcal{B})$ from the initialization, $\| (\mathcal{W}, \mathcal{B}) - (\mathcal{Z}, \mathcal{C}) \|_F$. Specifically, \cite{dziugaite17} presented a PAC-Bayesian bound involving the distance from initialization (rather than one involving the distance from origin) and showed that SGD can be made to explicitly regularize such a bound in a way that a non-vacuous PAC-Bayesian bound holds on the resulting network. We show that distance from initialization is in fact {\em implicitly} regularized by SGD and we investigate it in much greater detail in terms of its dependence on the parameter count and its ability to explain generalization.

\section{Experiments}
\label{sec:experiments}

We conduct experiments on the CIFAR-10 \& MNIST datasets, where we train networks of 4 hidden layers with varying width $H$ to minimize cross entropy loss and squared error loss. We study how distance from initialization (which we will denote in short as $r$) varies with width $H$ and training set size $m$ both for real data and partially/fully corrupted labels like in \cite{zhang17} and \cite{arpit17}.   We summarize our observations below and present more experiments and details in Appendix~\ref{app:experiments}.

 First, for both the real data and the corrupted data, the distance $r$ mostly remains constant {\em or even decreases} with width $H$ when $H$ is not too large (sometimes for very large $H$, $r$ shows only a slight increase that scales logarithmically with $H$).  For reference, see Figure~\ref{fig:cross-entropy} (left)  and sub-figure (c) in Figures ~\ref{fig:mnist-momentum},
~\ref{fig:cifar}, and ~\ref{fig:mnist}.

Second,  as we can see in Figures~\ref{fig:cross-entropy} (right), $r$ increases with more noise in the labels, and this increase is more pronounced when sample size $m$ is larger. This demonstrates that 
 larger distances need to be traveled  in order to achieve stronger levels of memorization. 
For further reference, see sub-figures (a) and (b) in  Figure~\ref{fig:noise} and the figures in Figure~\ref{fig:noise} and Figure~\ref{fig:varied-noise} for additional reference). 
  
  Finally, we note that even though $r$ is regularized to a width-independent value, it does
{\em grow with the training set size $m$}, typically at between the rates of  $m^{0.25}$ to $m^{0.4}$ (when there is no noise in the training data). For reference, see Figure~\ref{fig:cross-entropy} (left)  and sub-figure (d) in Figures ~\ref{fig:mnist-momentum},
~\ref{fig:cifar}, and ~\ref{fig:mnist}. The growth rate is more prominent for smaller $H$ or when there is more noise in the labels (see sub-figures (b) and (d) in Figure~\ref{fig:noise} and the figures in Figure~\ref{fig:noise} for additional reference).

\begin{figure}
        \begin{minipage}{.5\textwidth}
        \centering
        \includegraphics[scale=0.25]{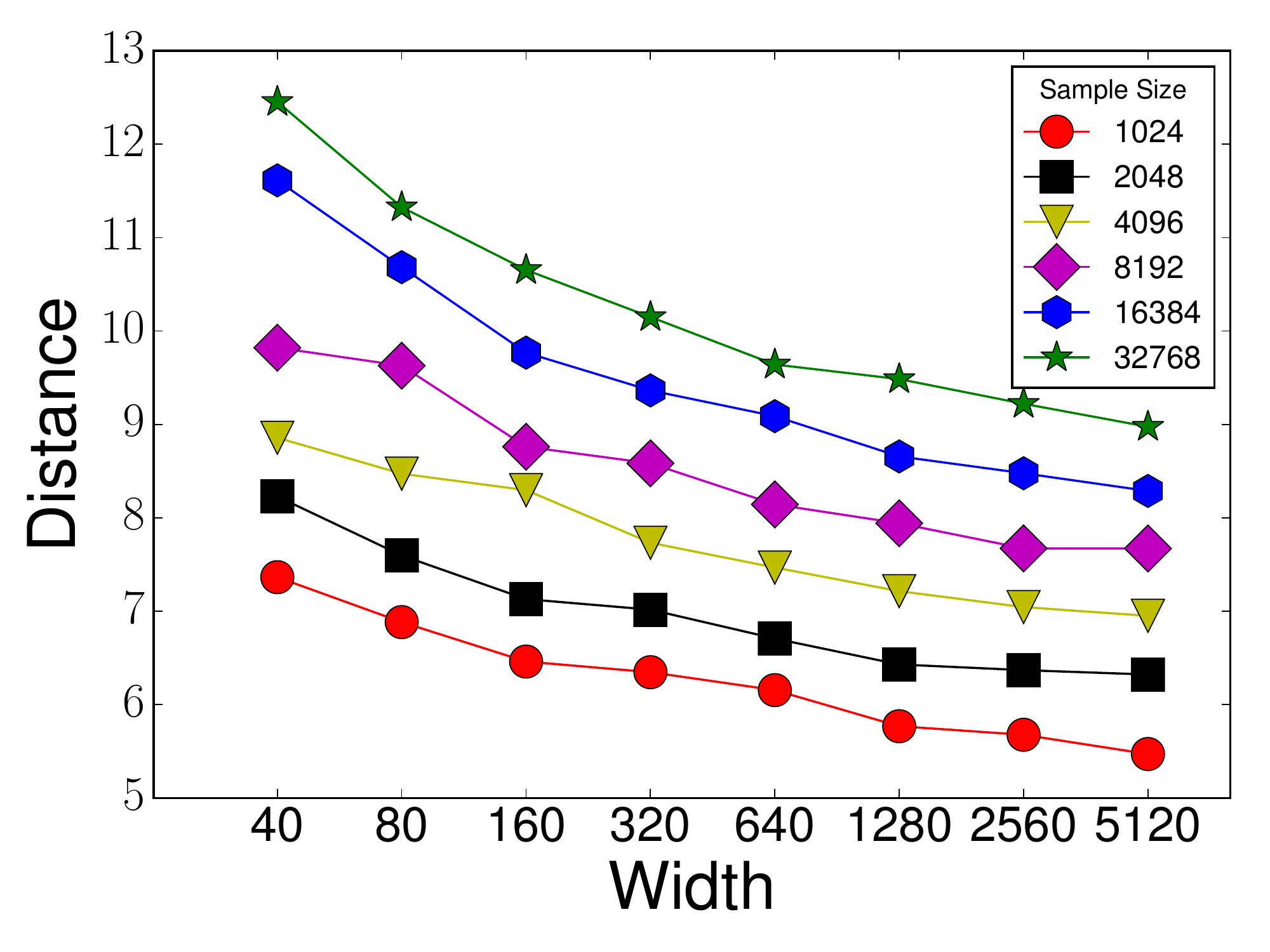}\\
    \end{minipage}%
        \begin{minipage}{.5\textwidth}
        \centering
        \includegraphics[scale=0.25]{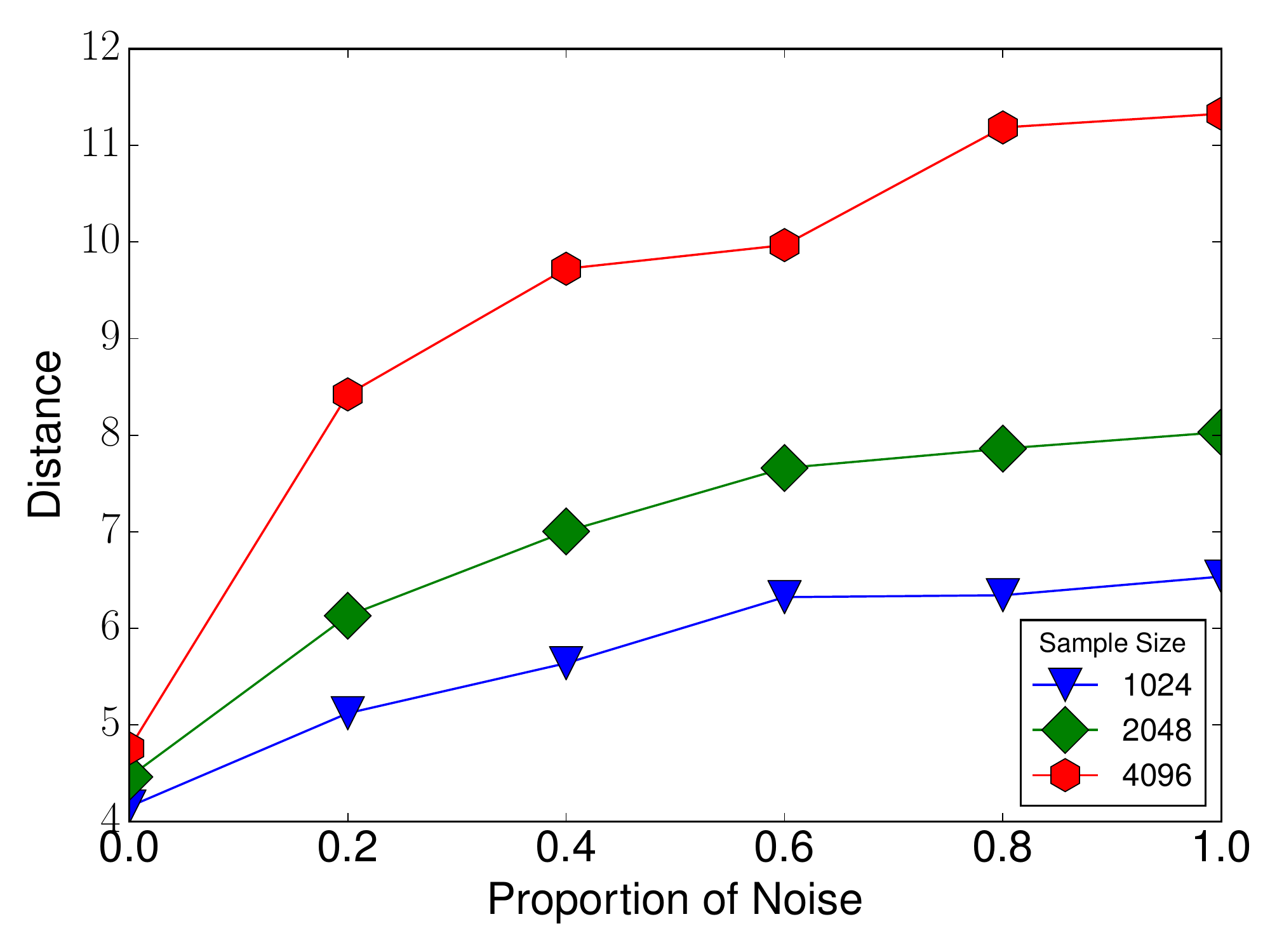}\\
    \end{minipage}\hfill
	\caption{Experiments on a 5 layer neural network trained to minimize the cross-entropy loss on the MNIST dataset until at least $99\%$ of the data is classified by a margin of at least $10$ i.e., the output on the correct class is at least $10$ larger than all the other classes. 
	\textbf{Left}: Distance from initialization is regularized to a $H$-independent value, and in fact, decreases with it. \textbf{Right}: Distance from initialization grows with the proportion of noise in the labels.}
    \label{fig:cross-entropy}
\end{figure}

\section{Complexity of linear networks within fixed distance from initialization}
\label{sec:linear-network}
Is the distance regularization observed above sufficient to explain generalization?  While many norm-based generalization bounds have already been derived for ReLU networks \citep{neyshabur15,neyshabur18pacbayes} which can be potentially improved with this observation, it seems non-trivial to prove an $H$-independent generalization bound with this observation alone. For example, it is easy to incorporate this quantity (in place of distance from the origin) in PAC-Bayesian analysis such as \cite{neyshabur18pacbayes}, as was already done in \cite{dziugaite17}. While this would result in bounds that are tighter by a factor of $\sqrt{H}$ (because distance from origin grows as $\sqrt{H}$, as we will discuss shortly in Section~\ref{sec:bad-norms}), the resulting bound still has dependence on the network width.

As a first step to test the usefulness of the observed distance regularization, we consider a network with all hidden units as simple linear units. If the Rademacher complexity of this space of networks was {\em not} independent of width $H$, then there would be no hope in expecting the same space but with non-linearities to have a width-independent complexity. Fortunately, we can show that this is not the case for linear networks. Our proof is based on how a Xavier-initialized network has weight matrices with width-independent spectral norms with high probability (see Corollary~\ref{cor:bounds}). We defer the proof for this to Appendix~\ref{app:proofs} (and present some additional results in Appendix~\ref{app:cool-properties}).

\begin{restatable}{theorem}{linearnetwork}
\label{thm:linear-network}
 For the network described in Section~\ref{sec:notations} but with linear activations,
 the Rademacher complexity corresponding to the hypotheses that are at an $\ell_2$ distance of at most $r$ satisfies:
\[
\mathbb{E}_{\vec{\xi}} \left[ \sup_{(\mathcal{W}, \mathcal{B}): \| (\mathcal{Z}, \mathcal{C}) -  (\mathcal{W}, \mathcal{B})\|_F \leq r } \sum_{i=1}^{m}  \xi_i f_{(\mathcal{W}, \mathcal{B})}(\vec{x}_i) \right] = \tilde{\mathcal{O}} \left( \frac{d c^d (r+1)^{d}\max_i \| \vec{x}_i\|}{\sqrt{m}} \right),
\]
where $c=\tilde{\Theta}(1)$.  

\end{restatable}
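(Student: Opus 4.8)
The plan is to use the fact that with linear activations the network computes an affine function of the input, and to bound the norm of the induced linear map by combining the distance‑from‑initialization constraint with the width‑independent spectral‑norm bounds on a Xavier initialization supplied by Corollary~\ref{cor:bounds}. Concretely, I would first condition on the high‑probability event of Corollary~\ref{cor:bounds}, on which $\|\vec{Z}_k\|_2 \le c$ simultaneously for all $k=1,\dots,d$ with $c = \tilde{\Theta}(1)$ (a union bound over the $d$ layers costs only a logarithmic factor). On this event, any $(\mathcal{W},\mathcal{B})$ with $\|(\mathcal{W},\mathcal{B}) - (\mathcal{Z},\mathcal{C})\|_F \le r$ satisfies the per‑layer bounds $\|\vec{W}_k - \vec{Z}_k\|_F \le r$ and $\|\vec{b}_k\| = \|\vec{b}_k - \vec{c}_k\| \le r$ (since $\mathcal{C}=0$), because each summand in $\|(\mathcal{W},\mathcal{B})-(\mathcal{Z},\mathcal{C})\|_F^2$ is at most $r^2$; hence $\|\vec{W}_k\|_2 \le \|\vec{Z}_k\|_2 + \|\vec{W}_k-\vec{Z}_k\|_2 \le \|\vec{Z}_k\|_2 + \|\vec{W}_k-\vec{Z}_k\|_F \le c+r$. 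This is the one place where width‑independence of the initialization's spectral norm matters: the triangle inequality does not reintroduce any dependence on $H$.

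Next, since $\phi$ is the identity, unrolling the recursion for $f^{(k)}$ shows that $f_{(\mathcal{W},\mathcal{B})}(\vec{x}) = \vec{a}^{T}\vec{x} + \beta$ with $\vec{a}^{T} = \vec{W}_d\vec{W}_{d-1}\cdots\vec{W}_1$ and $\beta = \sum_{k=1}^{d} \vec{W}_d\cdots\vec{W}_{k+1}\,\vec{b}_k$ (the product over an empty index set being the identity). Submultiplicativity of the spectral norm then gives $\|\vec{a}\| \le \prod_{k=1}^{d}\|\vec{W}_k\|_2 \le (c+r)^d$ and $|\beta| \le \sum_{k=1}^{d}\|\vec{b}_k\|\prod_{j=k+1}^{d}\|\vec{W}_j\|_2 \le \sum_{k=1}^{d} r(c+r)^{d-k} \le d\,(c+r)^{d}$, so the hypotheses in the distance ball form a subset of the affine class $\{\vec{x}\mapsto \vec{a}^{T}\vec{x}+\beta : \|\vec{a}\|\le(c+r)^d,\ |\beta|\le d(c+r)^d\}$.

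Finally I would apply the textbook Rademacher bound for bounded‑norm linear‑plus‑constant predictors, obtained inline by writing $\sum_i \xi_i(\vec{a}^{T}\vec{x}_i+\beta) = \vec{a}^{T}\big(\sum_i \xi_i\vec{x}_i\big) + \beta\sum_i\xi_i$, taking the supremum over the affine class and then expectations over $\vec{\xi}$, and using $\mathbb{E}_{\vec{\xi}}\|\sum_i\xi_i\vec{x}_i\| \le \sqrt{\sum_i\|\vec{x}_i\|^2} \le \sqrt{m}\max_i\|\vec{x}_i\|$ together with $\mathbb{E}_{\vec{\xi}}|\sum_i\xi_i|\le\sqrt{m}$, which yields
\[
\mathbb{E}_{\vec{\xi}}\left[\sup_{(\mathcal{W},\mathcal{B}):\,\|(\mathcal{Z},\mathcal{C})-(\mathcal{W},\mathcal{B})\|_F\le r}\ \sum_{i=1}^{m}\xi_i f_{(\mathcal{W},\mathcal{B})}(\vec{x}_i)\right] \le (c+r)^d\sqrt{m}\max_i\|\vec{x}_i\| + d(c+r)^d\sqrt{m}.
\]
Dividing by $m$ (the normalization in the definition of empirical Rademacher complexity) and bounding $(c+r)^d\le c^d(1+r)^d$ (assuming, without loss of generality, $c\ge 1$) collapses the right‑hand side to $\tilde{\mathcal{O}}\big(d\,c^d(r+1)^d\max_i\|\vec{x}_i\|/\sqrt{m}\big)$; the conditioning on the Corollary~\ref{cor:bounds} event only affects the ``with high probability'' qualifier, and on the complementary event the bound still holds trivially because the network output on a bounded input is controlled there.

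I do not expect a genuine analytic obstacle: once Corollary~\ref{cor:bounds} is in hand, the proof is essentially a triangle inequality, submultiplicativity, and the standard linear‑predictor Rademacher bound. The only steps needing care are (i) relaxing the single joint Frobenius‑ball constraint to $d$ per‑layer constraints, which is lossless here since it only enlarges the supremum while the target bound is already exponential in $d$, and (ii) propagating the bias terms, which start at $0$ and so each consume the full budget $r$ but then get amplified by the product of the spectral norms of the layers above them.
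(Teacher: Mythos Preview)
Your proposal is correct and reaches the same bound. The route, however, is genuinely different from the paper's. The paper argues by a layer‑by‑layer peeling inside the Rademacher expectation: it shows a recursion of the form
\[
\mathbb{E}_{\vec{\xi}}\Big[\sup\big\|\textstyle\sum_i\xi_i f^{(k)}(\vec{x}_i)\big\|\Big]
\le \tilde{\mathcal{O}}(r+\|\vec{Z}_k\|_2)\,\mathbb{E}_{\vec{\xi}}\Big[\sup\big\|\textstyle\sum_i\xi_i f^{(k-1)}(\vec{x}_i)\big\|\Big] + \tilde{\mathcal{O}}(r)\sqrt{m},
\]
invoking Khintchine--Kahane for the bias and the base case, and then iterates. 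You instead first collapse the linear network to a single affine map $f(\vec{x})=\vec{a}^{T}\vec{x}+\beta$, bound $\|\vec{a}\|$ and $|\beta|$ by submultiplicativity and the triangle inequality, and only then apply the standard bounded‑norm linear Rademacher bound. Both arguments rest on the same crucial input, namely the width‑independent spectral‑norm control of Corollary~\ref{cor:bounds}. Your approach is more elementary and arguably cleaner for the linear case; the paper's peeling is closer in spirit to the contraction‑style arguments one would attempt for ReLU networks, which is the open problem the theorem is meant to motivate. One minor remark: your final sentence about the complementary event is unnecessary and not quite right---the bound is simply a high‑probability statement over the initialization (as in the paper), and nothing more needs to be said there.
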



\section{Why initialization-independent norms may not explain generalization}
\label{sec:bad-norms}

We now go back and look at some norms studied in \cite{neyshabur17} and evaluate why they were unable to explain generalization.
First, consider the product of $\ell_2$ norms proportional
to $\prod_{i=1}^{d}\| \vec{W}_k\|_{F}^2$,
which they observe increases with the width $H$ for large $H$. Unfortunately, this could not explain why generalization error is width-independent because the best known bound on the Rademacher complexity of the class of $\ell_2$-norm-bounded networks grows with the norm bound and hence also grows with $H$ \citep{neyshabur15}. 
Through the proposition below, we present a more theoretically grounded perspective as to why this norm may not explain generalization:

\begin{restatable}{proposition}{badnorms}
With high probability over the draws of the random initialization, even though the untrained network provably has a $H$-independent generalization error $\tilde{\mathcal{O}}\left( 1/\sqrt{m}\right)$, its $\ell_2$ norm $\prod_{k=1}^{d}\| \vec{Z}_k \|_{F}^2$ grows as $\tilde{\Omega}(H^{d-2})$. 
\end{restatable}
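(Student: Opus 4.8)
The claim has two independent halves --- a generalization statement about the untrained network and a norm computation --- and I would treat them separately. For the generalization half, the key observation is that $(\mathcal{Z},\mathcal{C})$ is drawn \emph{before} the sample, so $f_{(\mathcal{Z},\mathcal{C})}$ is a single fixed hypothesis that does not depend on the data, and the composed loss $(\vec{x},y)\mapsto\mathcal{L}(f_{(\mathcal{Z},\mathcal{C})}(\vec{x}),y)$ is bounded in $[0,1]$. Hence Hoeffding's inequality --- equivalently, the empirical Rademacher complexity of a singleton class is $0$, so Lemma~\ref{lem:rademacher} applies with a vanishing complexity term --- gives, with probability $1-\delta$ over the $m$ samples, a generalization gap of at most $\sqrt{\log(1/\delta)/(2m)}=\tilde{\mathcal{O}}(1/\sqrt{m})$, with no dependence on $H$. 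This holds for every realization of the initialization, in particular on the high-probability event of the second half.

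For the norm half, the plan is a direct layerwise concentration computation using the shapes fixed in Section~\ref{sec:notations}: $\vec{Z}_1\in\mathbb{R}^{H\times n}$, $\vec{Z}_k\in\mathbb{R}^{H\times H}$ for $2\le k\le d-1$, and $\vec{Z}_d\in\mathbb{R}^{1\times H}$, with all entries i.i.d.\ $\mathcal{N}(0,\Theta(1/H))$ (equivalently, standard deviation $\Theta(1/\sqrt{H})$ as in Section~\ref{sec:notations}). Then $\|\vec{Z}_k\|_F^2$ equals $\Theta(1/H)$ times a $\chi^2$ variable whose degrees of freedom equal the number of entries of $\vec{Z}_k$ --- namely $Hn$ for $k=1$, $H^2$ for each of the $d-2$ inner layers, and $H$ for $k=d$ --- so $\mathbb{E}\|\vec{Z}_1\|_F^2=\Theta(n)$, $\mathbb{E}\|\vec{Z}_k\|_F^2=\Theta(H)$ for $2\le k\le d-1$, and $\mathbb{E}\|\vec{Z}_d\|_F^2=\Theta(1)$.

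Next I would convert these expectations into high-probability lower bounds; only a one-sided $\chi^2$ tail is needed, e.g.\ the Laurent--Massart bound $\Pr[\chi^2_p\le p/2]\le e^{-p/16}$. This yields $\|\vec{Z}_1\|_F^2=\Omega(1)$, $\|\vec{Z}_k\|_F^2=\Omega(H)$ for each of the $d-2$ inner layers, and $\|\vec{Z}_d\|_F^2=\Omega(1)$, each failing only with probability exponentially small in $H$. A union bound over the $d=\mathcal{O}(1)$ layers then gives, with probability $1-\delta$ for $H$ large enough (treating $n$, $d$, $\delta$ as constants), $\prod_{k=1}^{d}\|\vec{Z}_k\|_F^2=\Omega(1)\cdot\Omega(H)^{d-2}\cdot\Omega(1)=\Omega(H^{d-2})$, i.e.\ the claimed $\tilde{\Omega}(H^{d-2})$. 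Adding the matching upper $\chi^2$ tail would upgrade this to $\tilde{\Theta}(H^{d-2})$, sharpening the contrast with the $\tilde{\mathcal{O}}(1/\sqrt{m})$ true generalization error.

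There is no substantial technical obstacle; the two points needing a little care are (a) keeping track of the first and last layer shapes, since it is precisely the fact that $\vec{Z}_1$ and $\vec{Z}_d$ contribute only $\Theta(1)$ --- not $\Theta(H)$ --- to the product that produces the exponent $d-2$ rather than $d$, and (b) making the data-independence of the initialization explicit in the first half, since that is what legitimizes the elementary concentration argument there.
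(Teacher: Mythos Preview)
Your proposal is correct and follows essentially the same route as the paper: the generalization half is handled exactly as in the paper by observing that the untrained network is a single data-independent hypothesis with bounded loss (the paper phrases this as the effective capacity being the singleton $\{(\mathcal{Z},\mathcal{C})\}$ and then appeals to standard concentration), and the norm half is precisely the content of Corollary~\ref{cor:bounds}, which the paper simply cites while you re-derive it inline via the $\chi^2$ tail. Your point~(a) about the first and last layers contributing $\Theta(1)$ rather than $\Theta(H)$ is exactly what drives the exponent $d-2$ in both arguments.
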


The main takeaway from the above proposition  is that a norm-based capacity measure for neural networks may not be useful for explaining generalization if it is blind to the random initialization and instead measures any kind of distance of the weights from the origin. This is because, {\em for larger $H$, most random initializations and the origin, all lie farther and farther away from each other}. Therefore, it may not be reasonable to expect that for these initializations, SGD goes all the way close to the origin to find solutions.

\textbf{On the spectral norm}:  \cite{neyshabur17} also study a spectral norm proportional to $H^{d-1} \prod_{k=1}^{d} \| \vec{W}_k \|_2$. First we note that, like the $\ell_2$ norm above, even this grows with $H$ as $\tilde{\Omega}(H^{d-1})$ for the random initialization, because the spectral norm of the random matrices are $\tilde{\Theta}(1)$.  \cite{neyshabur17} then ask whether the factor of $H^{d-1}$ is ``necessary'' in this measure or not (in Section 2.2 of their paper). Formally, we frame this question as: is it sufficient if $\prod_{k=1}^{d} \| \vec{W}_k\|_2$ is regularized to an $H$-independent value (as against ensuring the same for $H^{d-1} \prod_{k=1}^{d} \| \vec{W}_k\|_2$) for guaranteeing $H$-independent generalization?  

Through our proposition below,  we argue that a better question can be asked. In particular, we  show that if distance from initialization is regularized to a $H$-independent value, then so is the term $\prod_{k=1}^{d} \| \vec{W}_k \|_2$. Therefore, $H$-independent spectral norms are less useful for the purpose of explaining generalization; we would rather want to answer whether it is sufficient if the distance norm is regularized to an $H$-independent value to guarantee $H$-independent generalization. Effectively, this would 
boil down to extending Theorem~\ref{thm:linear-network} to non-linear networks.

\begin{restatable}{proposition}{spectral}
$\prod_{k=1}^{d} \| \vec{W}_k \|_2 \leq \tilde{\mathcal{O}}\left(c^d\left(1+\|(\mathcal{W}, \mathcal{B}) - (\mathcal{Z},\mathcal{C})\|_F\right)^{d} \right)$ for some $c = \tilde{\mathcal{O}}(1)$.
\end{restatable}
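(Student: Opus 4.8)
The plan is to reduce the product of spectral norms of the trained weight matrices to a product of per-layer quantities, each of which splits into an initialization term plus a displacement term, and then to bound the initialization term using the width-independence of spectral norms at a Xavier initialization.

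First I would invoke Corollary~\ref{cor:bounds}: with high probability over the draw of $\mathcal{Z}$, every weight matrix satisfies $\|\vec{Z}_k\|_2 \le c$ for some $c = \tilde{\Theta}(1)$ that does not depend on $H$ (for the last layer, $\vec{W}_d$ is a column matrix and its spectral norm is simply its $\ell_2$ norm, which is also $\tilde{\Theta}(1)$). Without loss of generality take $c \ge 1$. Next, for each layer $k$ I would apply the triangle inequality for the spectral norm together with the elementary fact $\|\cdot\|_2 \le \|\cdot\|_F$:
\[
\|\vec{W}_k\|_2 \;\le\; \|\vec{Z}_k\|_2 + \|\vec{W}_k - \vec{Z}_k\|_2 \;\le\; c + \|\vec{W}_k - \vec{Z}_k\|_F \;\le\; c\bigl(1 + \|\vec{W}_k - \vec{Z}_k\|_F\bigr).
\]
Then I would observe, directly from the definition of the composite distance, that $\|\vec{W}_k - \vec{Z}_k\|_F \le \|(\mathcal{W},\mathcal{B}) - (\mathcal{Z},\mathcal{C})\|_F$ for every $k$, since the right-hand side is the square root of a sum of squared terms that includes $\|\vec{W}_k - \vec{Z}_k\|_F^2$. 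Hence each factor above is at most $c\bigl(1 + \|(\mathcal{W},\mathcal{B}) - (\mathcal{Z},\mathcal{C})\|_F\bigr)$, and multiplying over the $d$ layers gives
\[
\prod_{k=1}^{d} \|\vec{W}_k\|_2 \;\le\; c^d \bigl(1 + \|(\mathcal{W},\mathcal{B}) - (\mathcal{Z},\mathcal{C})\|_F\bigr)^{d},
\]
which is exactly the claimed bound once the high-probability qualifier is absorbed into the $\tilde{\mathcal{O}}$ notation.

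The only nontrivial ingredient — and hence the "main obstacle," such as it is — is the high-probability spectral-norm bound on a Xavier-initialized matrix supplied by Corollary~\ref{cor:bounds}; this is where width-independence actually enters, and everything else is the triangle inequality and monotonicity. Two minor points I would address for cleanliness: the bias vectors contribute to $\|(\mathcal{W},\mathcal{B}) - (\mathcal{Z},\mathcal{C})\|_F$ but not to the product, which only makes the inequality looser; and the last layer's matrix is degenerate (a single column), so "spectral norm" there should simply be read as the Euclidean norm, under which all the steps above go through verbatim.
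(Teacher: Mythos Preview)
Your proposal is correct and follows essentially the same approach as the paper: the paper's proof is the one-line observation $\|\vec{W}_k\|_2 \le \|\vec{Z}_k\|_2 + \|\vec{Z}_k - \vec{W}_k\|_F \le \|\vec{Z}_k\|_2 + \|(\mathcal{W},\mathcal{B}) - (\mathcal{Z},\mathcal{C})\|_F$, followed by an appeal to the spectral-norm bounds of Corollary~\ref{cor:bounds}. Your write-up is simply a more carefully spelled-out version of the same argument, with the additional (harmless) remarks about the bias terms and the degenerate last layer.
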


\section{Conclusion}
\label{sec:conclusions}
To explain generalization in deep networks, we highlight the need to understand the effective capacity of a model for a given random initialization of the network. Furthermore, our experiments suggest that distance moved by the training algorithm from its random initialization has a key role to play in this regard. This leads to multiple concrete open questions for future work.
First, why is distance from the initialization regularized by the training algorithm? Can we precisely bound this distance independent of the number of hidden units, $H$? 
Next, is this observation alone sufficient to explain generalization? More concretely, can we prove an $H$-independent bound on the empirical Rademacher complexity (or any other learning-theoretic complexity) for distance-regularized networks, like we could for linear networks in Theorem~\ref{thm:linear-network}?
If that is not possible, can we identify a more precise characterization of the effective capacity as defined in Definition~\ref{def:effective-capacity}? That is, {\em for a fixed random initialization, do the solutions obtained by the training algorithm on most training sets lie within a smaller subspace inside a ball of $H$-independent radius around the random initialization? }

\bibliographystyle{plainnat}
\bibliography{references}

\appendix

\section{Useful Lemmas}
\label{app:useful-lemmas}
In this section, we present some standard facts and some corollaries relevant for our discussion. First, we make use of the following theorem on squares of normal random variables to derive a bound on the Frobenius norm of a matrix with random gaussian entries.

\begin{theorem}
\label{thm:normsq}
If $z_{1}, z_2, \hdots, z_k \sim \mathcal{N}(0,1)$ be $k$ standard normal random variables drawn independently, then for all $t \in (0,1)$:
\[
Pr\left[ \left|\frac{1}{k}\sum_{i=1}^{k} z_{i}^2 - 1 \right| \geq t \right] \leq 2e^{-kt^2/8}  
\]
\end{theorem}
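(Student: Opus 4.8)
The plan is to prove each of the two tails separately by the standard Chernoff (exponential Markov) method, using the explicit moment generating function of a single squared Gaussian, and then combine them by a union bound. Write $S = \sum_{i=1}^{k} z_i^2$, so that $\tfrac{1}{k}S$ is the quantity of interest and $\mathbb{E}[S]=k$. Since the $z_i$ are independent, the moment generating function factorizes as $\mathbb{E}[e^{\lambda S}] = \prod_{i=1}^{k}\mathbb{E}[e^{\lambda z_i^2}] = (1-2\lambda)^{-k/2}$ for any $\lambda < 1/2$, where the single-variable identity $\mathbb{E}[e^{\lambda z_i^2}] = (1-2\lambda)^{-1/2}$ follows from completing the square inside the Gaussian integral.

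For the upper tail I would bound, for $\lambda \in (0,1/2)$,
\[
Pr[S \geq k(1+t)] \leq e^{-\lambda k(1+t)}\,\mathbb{E}[e^{\lambda S}] = \exp\left( k\left( -\lambda(1+t) - \tfrac{1}{2}\log(1-2\lambda)\right)\right).
\]
Minimizing the exponent over $\lambda$ gives the optimizer $\lambda^\star = \frac{t}{2(1+t)} \in (0,1/2)$, and substituting it yields
\[
Pr\left[\tfrac{1}{k}S \geq 1+t\right] \leq \exp\left(\tfrac{k}{2}\left(\log(1+t)-t\right)\right).
\]
For the lower tail I would run the same argument with a negative exponent, i.e.\ bound $Pr[S \leq k(1-t)] = Pr[e^{-\lambda S} \geq e^{-\lambda k(1-t)}]$ using $\mathbb{E}[e^{-\lambda S}] = (1+2\lambda)^{-k/2}$; optimizing over $\lambda>0$ (optimizer $\lambda^\star = \frac{t}{2(1-t)}$) gives $Pr[\tfrac{1}{k}S \leq 1-t] \leq \exp(\tfrac{k}{2}(t + \log(1-t)))$.

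It then remains to control the two scalar exponents. For the lower tail this is easy: since $\log(1-t) = -\sum_{j\geq 1} t^j/j \leq -t - t^2/2$, we get $t + \log(1-t) \leq -t^2/2$, so that tail is at most $e^{-kt^2/4} \leq e^{-kt^2/8}$. The main (and essentially only) delicate point is the upper tail, where the claimed constant forces the sharp inequality $\log(1+t)-t \leq -t^2/4$ for all $t \in (0,1)$; a naive truncated Taylor series is too loose near $t=1$, so I would instead set $\psi(t) = \log(1+t)-t+t^2/4$, compute $\psi'(t) = \frac{t(t-1)}{2(1+t)}$, observe that $\psi'(t) \leq 0$ on $(0,1)$ while $\psi(0)=0$, and conclude $\psi(t)\leq 0$. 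This bounds the upper tail by $e^{-kt^2/8}$, and a union bound over the two tails contributes the factor of $2$ and completes the proof.
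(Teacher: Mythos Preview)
Your proof is correct: the Chernoff/Cram\'er argument with the explicit chi-square MGF, the optimized choices $\lambda^\star = \tfrac{t}{2(1+t)}$ and $\lambda^\star = \tfrac{t}{2(1-t)}$, and the scalar inequalities $t+\log(1-t)\leq -t^2/2$ and $\log(1+t)-t\leq -t^2/4$ (the latter via the monotonicity argument $\psi'(t)=\tfrac{t(t-1)}{2(1+t)}\leq 0$ on $(0,1)$) all check out and combine to give exactly $2e^{-kt^2/8}$.

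There is, however, nothing to compare against: the paper does not prove this theorem. It is listed under ``Useful Lemmas'' as a standard concentration fact for sums of squared Gaussians and is simply quoted without argument, so your write-up actually supplies what the paper omits. The approach you give is the standard one for this inequality (sub-exponential/Bernstein-type concentration for chi-square variables), and no alternative route is suggested by the paper.
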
 
%
%

Next, we use the following theorem based on Theorem 2.1.1 in \cite{wainwright}, to bound the spectral norm of a matrix with random gaussian entries.
\begin{theorem}
Let $\vec{W}$ be a $n_1 \times n_2$ matrix with entries all drawn independently at random from $\mathcal{N}(0,1)$. Then, given $n_1 \geq n_2$, 
\[
Pr \left[ \left| \sup_{u: \| u\| = 1}   \frac{\| \vec{W} u  \|^2}{n_1} - 1 \right| \geq \mathcal{O}\left(\sqrt{\frac{n_2}{n_1} } + t\right) \right] \leq \mathcal{O} \left( e^{-n_1t^2/2}\right)
\]
\end{theorem}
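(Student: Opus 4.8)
The plan is to reduce the supremum over the whole sphere to a maximum over a finite $\epsilon$-net, apply the chi-squared concentration of Theorem~\ref{thm:normsq} pointwise, and close with a union bound. First I would normalize by setting $\vec{A} = \vec{W}/\sqrt{n_1}$, so that the quantity of interest is $\sup_{\|u\|=1}\|\vec{A}u\|^2 = \lambda_{\max}(\vec{A}^T\vec{A})$, and observe that $\left|\lambda_{\max}(\vec{A}^T\vec{A}) - 1\right| = \left|\lambda_{\max}(\vec{A}^T\vec{A} - \vec{I})\right| \le \|\vec{A}^T\vec{A} - \vec{I}\|_2$. The task thus becomes bounding the spectral norm of the symmetric matrix $\vec{B} = \vec{A}^T\vec{A} - \vec{I}$.

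The pointwise ingredient is that for a fixed unit vector $v \in \mathbb{R}^{n_2}$, the coordinates of $\vec{W}v \in \mathbb{R}^{n_1}$ are i.i.d. $\mathcal{N}(0,1)$ (each coordinate is $\sum_j W_{ij}v_j$, a Gaussian of variance $\|v\|^2 = 1$, and distinct rows are independent). Hence $\|\vec{W}v\|^2$ is a sum of $n_1$ independent squared standard normals, and Theorem~\ref{thm:normsq} yields $Pr\left[\left|\|\vec{W}v\|^2/n_1 - 1\right| \ge s\right] \le 2e^{-n_1 s^2/8}$. Crucially, $\|\vec{W}v\|^2/n_1 - 1 = v^T\vec{B}v$ is exactly the quadratic form of $\vec{B}$ evaluated at $v$.

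Next I would fix an $\epsilon$-net $\mathcal{N}_\epsilon$ of the unit sphere $S^{n_2-1}$ with $|\mathcal{N}_\epsilon| \le (1+2/\epsilon)^{n_2}$ and use the standard discretization bound for quadratic forms of a symmetric matrix: for any unit $u$, picking $v \in \mathcal{N}_\epsilon$ with $\|u-v\|\le\epsilon$ and using $u^T\vec{B}u - v^T\vec{B}v = (u-v)^T\vec{B}(u+v)$ gives $|u^T\vec{B}u - v^T\vec{B}v| \le 2\epsilon\|\vec{B}\|_2$, whence $\|\vec{B}\|_2 \le (1-2\epsilon)^{-1}\max_{v\in\mathcal{N}_\epsilon}|v^T\vec{B}v|$. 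Taking $\epsilon = 1/4$ makes this prefactor equal to $2$. Combining with the pointwise tail bound and a union bound over the net yields $Pr\left[\|\vec{B}\|_2 \ge 2s\right] \le 2(1+2/\epsilon)^{n_2}\,e^{-n_1 s^2/8} = 2\,e^{\,n_2\ln 9 \,-\, n_1 s^2/8}$.

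Finally I would set $s = C(\sqrt{n_2/n_1} + t)$; using $(\sqrt{n_2/n_1}+t)^2 \ge n_2/n_1 + t^2$, the exponent is at most $n_2\ln 9 - C^2(n_2 + n_1 t^2)/8$, and choosing $C$ a large enough absolute constant (e.g. $C=5$) makes the $n_2$-term nonpositive while leaving $-n_1 t^2/2$, giving the claimed $\mathcal{O}(e^{-n_1 t^2/2})$ failure probability with deviation at most $2s = \mathcal{O}(\sqrt{n_2/n_1} + t)$. The one delicate point — and the reason the argument must run through the quadratic form $v^T\vec{B}v$ rather than the naive operator-norm net $\|\vec{A}\|_2 \le (1-\epsilon)^{-1}\max_v\|\vec{A}v\|$ — is that the latter multiplies the centering constant $1$ by $(1-\epsilon)^{-2} > 1$ and destroys concentration around $1$, whereas the quadratic-form discretization preserves the additive $|\cdot - 1|$ form exactly. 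The $\sqrt{n_2/n_1}$ term is precisely the metric-entropy cost of covering $S^{n_2-1}$, traded against the $e^{-n_1 s^2}$ pointwise concentration.
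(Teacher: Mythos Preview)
The paper does not actually prove this statement: it is quoted as ``based on Theorem 2.1.1 in \cite{wainwright}'' and used as a black box, so there is no in-paper proof to compare against. Your $\epsilon$-net argument is essentially the standard proof one finds in that reference (and in Vershynin's notes): discretize the sphere, apply chi-squared concentration pointwise, and take a union bound, with the key observation that one must discretize the \emph{quadratic form} $v^T(\vec{A}^T\vec{A}-\vec{I})v$ rather than $\|\vec{A}v\|$ itself in order to preserve the centering at $1$. Your execution of this is correct.

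One small point worth flagging: you invoke Theorem~\ref{thm:normsq} at deviation level $s = C(\sqrt{n_2/n_1}+t)$, but that theorem as stated in the paper carries the restriction $t \in (0,1)$, so strictly speaking the sub-Gaussian tail $e^{-n_1 s^2/8}$ is only guaranteed for $s < 1$. For $s \ge 1$ the chi-squared tail is sub-exponential ($e^{-c\,n_1 s}$) rather than sub-Gaussian; this still suffices to absorb the $9^{n_2}$ entropy cost and recover the stated bound, but you should say a word about that regime (or simply note that for $\sqrt{n_2/n_1}+t$ bounded away from zero the claimed probability bound is vacuously $\mathcal{O}(1)$ anyway, which the $\mathcal{O}(\cdot)$ on the right-hand side permits).
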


As a result of the above theorems, we have that:
\begin{corollary}
\label{cor:bounds}
For a network of more than $1$ hidden layer, and $H$ hidden units per layer, when its initialization $\mathcal{Z}$ is according to Xavier i.e.,  when all entries are drawn from $\mathcal{N}(0,\mathcal{O}(1/\sqrt{H}))$, with high probability we have: 
\begin{itemize}
\item $\|\vec{Z}_1\|_F = \tilde{{\Theta}}(\sqrt{n})$, $\|\vec{Z}_k\|_F = \tilde{{\Theta}}(\sqrt{H})$ for $d > k > 1$, $\|\vec{Z}_d\|_F = \tilde{{\Theta}}(1)$.
\item  $\|\vec{Z}_k\|_2 = \tilde{{\Theta}}(1)$ for $k$.
\end{itemize}
\end{corollary}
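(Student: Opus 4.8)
The plan is to treat each weight matrix $\vec{Z}_k$ as a rescaled standard Gaussian matrix and apply the two concentration theorems above entrywise (for the Frobenius norms) and as operators (for the spectral norms), then union-bound over the $d$ layers. Under Xavier initialization each entry of $\vec{Z}_k$ is drawn from $\mathcal{N}(0,\sigma^2)$ with variance $\sigma^2 = \Theta(1/H)$ (this is the scaling consistent with the stated conclusions), so I would write $\vec{Z}_k = \sigma\,\vec{G}_k$ where $\vec{G}_k$ has i.i.d.\ $\mathcal{N}(0,1)$ entries. The relevant dimensions are $\vec{Z}_1 \in \mathbb{R}^{H\times n}$, $\vec{Z}_k \in \mathbb{R}^{H\times H}$ for $1<k<d$, and $\vec{Z}_d \in \mathbb{R}^{1\times H}$, so the entry counts $N_k$ are $Hn$, $H^2$, and $H$ respectively.

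For the Frobenius norms I would apply Theorem~\ref{thm:normsq} directly. Since $\|\vec{Z}_k\|_F^2 = \sigma^2 \sum_i g_i^2$ over the $N_k$ standard-normal entries $g_i$, the theorem with $k=N_k$ and a small (constant or mildly shrinking) $t$ gives $\|\vec{Z}_k\|_F^2 = \sigma^2 N_k(1\pm t)$ with failure probability at most $2e^{-N_k t^2/8}$. Substituting $\sigma^2 N_k$ yields $n$, $H$, and $1$ for $k=1$, $1<k<d$, and $k=d$, hence $\|\vec{Z}_1\|_F = \tilde\Theta(\sqrt n)$, $\|\vec{Z}_k\|_F = \tilde\Theta(\sqrt H)$, and $\|\vec{Z}_d\|_F = \tilde\Theta(1)$; because each $N_k \geq H$, the failure probabilities are exponentially small.

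For the spectral norms, the upper bounds follow from the Wainwright-based theorem above, noting $\sup_{\|u\|=1}\|\vec{G}_k u\|^2 = \|\vec{G}_k\|_2^2$. For the interior square matrices ($n_1=n_2=H$) and for $\vec{Z}_1$ (taking $n_1=H\geq n_2=n$ in the natural wide regime $H\geq n$), the theorem gives $\|\vec{G}_k\|_2^2 = H\,(1+\mathcal{O}(1))$, hence $\|\vec{Z}_k\|_2 = \sigma\,\mathcal{O}(\sqrt H) = \mathcal{O}(1)$; the $1\times H$ matrix $\vec{Z}_d$ has spectral norm equal to its Frobenius norm, already shown to be $\tilde\Theta(1)$. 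The matching lower bounds require a separate argument, because for (nearly) square matrices the deviation term $\sqrt{n_2/n_1}$ is of constant order and the lower side of the two-sided bound becomes vacuous. Here I would use that the spectral norm dominates the $\ell_2$ norm of any single column, $\|\vec{G}_k\|_2 \geq \|\vec{G}_k e_1\|$, and apply Theorem~\ref{thm:normsq} to the entries of that column to obtain $\|\vec{G}_k\|_2 = \tilde\Omega(\sqrt H)$, so $\|\vec{Z}_k\|_2 = \tilde\Omega(1)$. Equivalently one can use $\|\vec{G}_k\|_2 \geq \|\vec{G}_k\|_F/\sqrt{\mathrm{rank}}$ together with the Frobenius bound just established.

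Finally I would union-bound the $2d$ events above. Since $d$ is constant relative to $H$ and every individual failure probability is exponentially small in $H$ (or $n$), all the stated bounds hold simultaneously with high probability. The main obstacle is the spectral-norm lower bound for the square interior layers, since the concentration theorem only usefully controls the upper deviation when $n_2/n_1 \to 1$; the column-norm (or Frobenius-to-spectral) argument is precisely what bridges this gap.
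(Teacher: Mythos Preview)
Your proposal is correct and matches the paper's intent: the paper gives no explicit proof of the corollary, merely stating that it follows from the two preceding concentration theorems, and your argument is precisely the natural way to fill in those details. Your observation that the Wainwright-type bound is vacuous on the lower side for square matrices (and your fix via a column-norm or Frobenius-over-rank argument) is a genuine point the paper glosses over, but it does not represent a different route, only a careful execution of the one the paper indicates.
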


Next, we present the Khintchine-Kahane inequality which we use to bound Rademacher complexity in the absence of the supremum within the expectation.

\begin{theorem}
\label{thm:kk}
For any $0 < p < \infty$ and set of scalar values $ \{ x_1, x_2, \hdots, x_m\}$, when $\vec{\xi}$ is a Rademacher vector sampled uniformly from $\{ -1, 1\}^m$:
\[
\left( \mathbb{E}_{\vec{\xi}} \left[\sum_{i=1}^{m} \left| \xi_i x_i \right|^p\right]\right)^{1/p} \leq C_{p} \left( \sum_{i=1}^{m} {x}_i^2\right)^{1/2}
\]
where $C_p$ is a constant dependent only on $p$.
\end{theorem}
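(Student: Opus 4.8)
The plan is to prove the upper-bound half of the Khintchine--Kahane inequality, i.e.\ that the $L^p$ norm of the Rademacher sum $S := \sum_{i=1}^m \xi_i x_i$ is controlled by its $\ell_2$ norm $\sigma := \left(\sum_{i=1}^m x_i^2\right)^{1/2}$, with a constant depending only on $p$. (The displayed expression is to be read with the randomness summed \emph{inside} the absolute value, i.e.\ as the single sum $S$; otherwise $|\xi_i x_i| = |x_i|$ and the left side would not involve $\vec{\xi}$ at all.) By homogeneity we may rescale and assume $\sigma = 1$. The route I would take is the standard subgaussian one: first control the moment generating function of $S$, convert this to a tail bound, and then integrate the tail to recover every moment at once, which conveniently covers the whole range $0 < p < \infty$ in a single argument.

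First I would bound the MGF. For a single Rademacher variable, $\mathbb{E}_{\xi_i}\!\left[e^{\lambda \xi_i x_i}\right] = \cosh(\lambda x_i)$, and the elementary Taylor comparison $\cosh(u) = \sum_{k \geq 0} \tfrac{u^{2k}}{(2k)!} \leq \sum_{k \geq 0} \tfrac{u^{2k}}{2^k k!} = e^{u^2/2}$ gives $\cosh(\lambda x_i) \leq e^{\lambda^2 x_i^2/2}$. Since the $\xi_i$ are independent, multiplying over $i$ yields $\mathbb{E}\!\left[e^{\lambda S}\right] \leq e^{\lambda^2 \sigma^2 / 2}$ for every $\lambda \in \mathbb{R}$; applying this to $\pm\lambda$ and optimizing the resulting Chernoff bound over $\lambda$ produces the subgaussian tail $\Pr\!\left[|S| \geq t\right] \leq 2 e^{-t^2/(2\sigma^2)}$.

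Next I would pass from the tail to the moment. Using the layer-cake identity for the nonnegative variable $|S|^p$,
\[
\mathbb{E}\!\left[|S|^p\right] = \int_0^\infty p\, t^{p-1}\, \Pr\!\left[|S| > t\right]\, dt \leq 2p \int_0^\infty t^{p-1} e^{-t^2/(2\sigma^2)}\, dt,
\]
which, after the substitution $u = t^2/(2\sigma^2)$ and including the prefactor $2p$, evaluates to $p\,(2\sigma^2)^{p/2}\,\Gamma(p/2) = \bigl(p\,2^{p/2}\,\Gamma(p/2)\bigr)\,\sigma^p$. This is of the form $C_p^{\,p}\,\sigma^p$ with $C_p := \bigl(p\,2^{p/2}\,\Gamma(p/2)\bigr)^{1/p}$ a finite constant depending only on $p$; taking $p$-th roots gives $\left(\mathbb{E}|S|^p\right)^{1/p} \leq C_p\,\sigma = C_p \left(\sum_i x_i^2\right)^{1/2}$, as claimed.

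The argument has no genuinely hard step, and the main (minor) obstacle is simply bookkeeping the constant $C_p$ and checking convergence of the Gamma integral. The one point worth care is that the tail-to-moment conversion via the layer-cake formula is valid for every $p > 0$ and not just $p \geq 1$, so the same computation simultaneously yields the inequality across the entire range of exponents; for $0 < p < 1$ the left-hand quantity is no longer a norm, but both the integral and the resulting bound remain meaningful and finite.
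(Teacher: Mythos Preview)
Your proof is correct and complete; the subgaussian MGF bound followed by the layer-cake integration is the standard modern route to Khintchine's inequality, and your computation of the constant $C_p = \bigl(p\,2^{p/2}\,\Gamma(p/2)\bigr)^{1/p}$ is accurate. You are also right to flag that the statement as printed has the absolute value in the wrong place: the intended left-hand side is $\bigl(\mathbb{E}_{\vec{\xi}}\bigl|\sum_i \xi_i x_i\bigr|^p\bigr)^{1/p}$, since otherwise the expression is deterministic.

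However, there is nothing to compare against: the paper does not prove this theorem. It appears in the appendix of ``Useful Lemmas'' as a quoted standard result, alongside other cited facts (concentration of chi-squared variables, spectral norm bounds for Gaussian matrices), and is used only to derive Corollary~\ref{cor:kk} and then applied inside the proof of Theorem~\ref{thm:linear-network}. So your write-up supplies a proof where the paper simply cites the literature.
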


It is simple to extend this to vector-valued variables for $p=2$, which is what we will need specifically for our discussion:

\begin{corollary}
\label{cor:kk}
For a set of vectors $ \{ \vec{x}_1, \vec{x}_2, \hdots, \vec{x}_m\}$, when $\vec{\xi}$ is a Rademacher vector sampled uniformly from $\{ -1, 1\}^m$:
\[
\mathbb{E}_{\vec{\xi}} \left[\left\| \sum_{i=1}^{m}  \xi_i \vec{x}_i \right\|\right] \leq \left( \mathbb{E}_{\vec{\xi}} \left[ \left\|\sum_{i=1}^{m}  \xi_i\vec{x}_i \right\|^2\right]\right)^{1/2} \leq C_{p} \left( \sum_{i=1}^{m} \|\vec{x}_i\|^2\right)^{1/2}
\]
\end{corollary}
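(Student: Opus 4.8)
The plan is to establish the two inequalities separately: the first by Jensen's inequality, and the second by reducing the vector-valued Rademacher sum to scalar sums and invoking the scalar Khintchine--Kahane inequality (Theorem~\ref{thm:kk}) with $p=2$ in each coordinate.

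For the first inequality, I would set $Y = \left\| \sum_{i=1}^m \xi_i \vec{x}_i \right\|$, a nonnegative random variable. Since the square root is concave, Jensen's inequality gives $\mathbb{E}_{\vec{\xi}}[Y] = \mathbb{E}_{\vec{\xi}}\left[\sqrt{Y^2}\right] \leq \sqrt{\mathbb{E}_{\vec{\xi}}[Y^2]}$, which is precisely the claimed bound $\mathbb{E}_{\vec{\xi}}\left[\left\| \sum_i \xi_i \vec{x}_i \right\|\right] \leq \left(\mathbb{E}_{\vec{\xi}}\left[\left\| \sum_i \xi_i \vec{x}_i \right\|^2\right]\right)^{1/2}$.

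For the second inequality, I would write each vector in coordinates as $\vec{x}_i = (x_{i1}, \dots, x_{in})$, so that the squared Euclidean norm decomposes coordinatewise as
\[
\left\| \sum_{i=1}^m \xi_i \vec{x}_i \right\|^2 = \sum_{j=1}^n \left( \sum_{i=1}^m \xi_i x_{ij} \right)^2 .
\]
Taking the expectation and using linearity yields $\mathbb{E}_{\vec{\xi}}\left[\left\| \sum_i \xi_i \vec{x}_i \right\|^2\right] = \sum_{j=1}^n \mathbb{E}_{\vec{\xi}}\left[\left(\sum_i \xi_i x_{ij}\right)^2\right]$. For each fixed coordinate $j$ the inner term is a Rademacher combination of the scalars $\{x_{1j},\dots,x_{mj}\}$, so Theorem~\ref{thm:kk} with $p=2$ bounds $\mathbb{E}_{\vec{\xi}}\left[\left(\sum_i \xi_i x_{ij}\right)^2\right] \leq C_2^2 \sum_i x_{ij}^2$. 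Summing over $j$ and exchanging the order of summation recovers $\sum_j \sum_i x_{ij}^2 = \sum_i \|\vec{x}_i\|^2$, giving $\mathbb{E}_{\vec{\xi}}\left[\left\| \sum_i \xi_i \vec{x}_i \right\|^2\right] \leq C_2^2 \sum_i \|\vec{x}_i\|^2$; taking square roots completes the proof with constant $C_p = C_2$.

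There is no genuine obstacle here; the only point requiring care is that the vector-valued expectation must first be reduced to scalar Rademacher sums before the scalar inequality can be applied, which the coordinatewise decomposition accomplishes. As a sanity check, I note that for the special case $p=2$ the second step is actually an equality: since the $\xi_i$ are independent with $\mathbb{E}[\xi_i \xi_j] = \delta_{ij}$, expanding $\left\| \sum_i \xi_i \vec{x}_i \right\|^2 = \sum_{i,j} \xi_i \xi_j \langle \vec{x}_i, \vec{x}_j \rangle$ and taking the expectation directly produces $\sum_i \|\vec{x}_i\|^2$, so one may in fact take $C_2 = 1$. Either route establishes the corollary.
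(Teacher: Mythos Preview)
Your proposal is correct and follows essentially the same route as the paper: Jensen's inequality for the first bound, and a coordinatewise application of the scalar Khintchine--Kahane inequality (Theorem~\ref{thm:kk}) for the second, summing the resulting per-coordinate bounds to recover $\sum_i \|\vec{x}_i\|^2$. Your additional remark that the $p=2$ case is actually an equality with $C_2=1$ (via $\mathbb{E}[\xi_i\xi_j]=\delta_{ij}$) is a nice sharpening not spelled out in the paper.
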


Here, the first inequality follows from Jensen's inequality. To derive the next, we first apply the Khintchine-Kahane inequality for each dimension of these vectors. We then square the inequalities and sum them up, after which we take the square root of both sides.

\section{Proofs}
\label{app:proofs}
We now present proofs for the results stated in the main paper. First we begin with the simple lemma, referred to in Section~\ref{sec:problem}, that explains why it is sufficient to study an effective capacity of the model that is dependent on the random initialization, the algorithm and the underlying data distribution, in order to understand generalization.

\begin{restatable}{lemma}{rademacher}
\label{lem:rademacher}
If with probability $1- \frac{\delta}{2}$ over the random draws of the initialization $(\mathcal{Z}, \mathcal{C})$ and the random draws of $m$ samples $\mathcal{S} = \{ (\vec{x}_i, y_i) | i=1,\hdots m \}$, the empirical Rademacher complexity of the space of parameters $\mathcal{H}_{m,\delta/2}[\mathcal{D}, (\mathcal{Z}, \mathcal{C}), \mathcal{A}]$ with respect to the loss function over the samples satisfies  the following $H$-independent bound:
\[
\frac{1}{m}\mathbb{E}_{\vec{\xi}} \left[ \sup_{(\mathcal{W}, \mathcal{B}) \in \mathcal{H}_{m,\delta/2}[\mathcal{D}, (\mathcal{Z}, \mathcal{C}), \mathcal{A}]} \sum_{i=1}^{m}  \xi_i \mathcal{L}(f_{(\mathcal{W}, \mathcal{B})}(\vec{x}_i), y_i) \right] = \mathcal{O} \left( poly\left(\frac{1}{m} , \log \frac{1}{\delta}\right)\right),
\]
then with probability $1-\delta$ over the random initialization and random draws of $m$ samples $\mathcal{S}$, the generalization error of the hypothesis $(\mathcal{W}^\star, \mathcal{B}^\star)$ output by algorithm $\mathcal{A}$ trained on $\mathcal{S}$ satisfies:
\[
\mathbb{E}_{(\vec{x},y) \sim \mathcal{D}} \left[ \mathcal{L}(f_{(\mathcal{W}^\star, \mathcal{B}^\star)}(\vec{x}), y)\right]- \frac{1}{m}\sum_{i=1}^{m} \mathcal{L}(f_{(\mathcal{W}^\star, \mathcal{B}^\star)}(\vec{x}_i), y_i) = \mathcal{O} \left( poly\left(\frac{1}{m} , \log \frac{1}{\delta}\right)\right)
\] 

\end{restatable}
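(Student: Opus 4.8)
The plan is to reduce this to the classical uniform‑convergence theorem bounding the generalization gap of a \emph{fixed} $[0,1]$‑valued function class by its empirical Rademacher complexity (the usual symmetrization/McDiarmid argument); the only real subtlety is that here the function class is randomized, through the initialization.

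The observation I would start from is that, once the initialization $(\mathcal{Z},\mathcal{C})$ is fixed, the set $\mathcal{H}_{m,\delta/2}[\mathcal{D},(\mathcal{Z},\mathcal{C}),\mathcal{A}]$ is a \emph{deterministic} hypothesis class that does not depend on the particular training sample $\mathcal{S}$: by Definition~\ref{def:effective-capacity} it depends only on $\mathcal{D}$, on $(\mathcal{Z},\mathcal{C})$, and on $\mathcal{A}$. Hence, conditioning on $(\mathcal{Z},\mathcal{C})$, I may treat $\mathcal{G}:=\{\,(\vec x,y)\mapsto \mathcal{L}(f_{(\mathcal{W},\mathcal{B})}(\vec x),y)\;:\;(\mathcal{W},\mathcal{B})\in\mathcal{H}_{m,\delta/2}[\mathcal{D},(\mathcal{Z},\mathcal{C}),\mathcal{A}]\,\}$ as a fixed class of functions valued in $[0,1]$ (recall $\mathcal{L}$ is $[0,1]$‑valued and the labels lie in $[0,1]$). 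The standard bound then gives: for any $\delta'\in(0,1)$, with probability at least $1-\delta'$ over the i.i.d.\ draw of $\mathcal{S}$ (writing $\mathcal{H}$ for $\mathcal{H}_{m,\delta/2}[\mathcal{D},(\mathcal{Z},\mathcal{C}),\mathcal{A}]$),
\[
\sup_{(\mathcal{W},\mathcal{B})\in\mathcal{H}}\!\left(\mathbb{E}_{(\vec x,y)\sim\mathcal{D}}\!\left[\mathcal{L}(f_{(\mathcal{W},\mathcal{B})}(\vec x),y)\right]-\frac1m\sum_{i=1}^m\mathcal{L}(f_{(\mathcal{W},\mathcal{B})}(\vec x_i),y_i)\right)
\le \frac{2}{m}\,\mathbb{E}_{\vec\xi}\!\left[\sup_{(\mathcal{W},\mathcal{B})\in\mathcal{H}}\sum_{i=1}^m\xi_i\,\mathcal{L}(f_{(\mathcal{W},\mathcal{B})}(\vec x_i),y_i)\right]+3\sqrt{\frac{\log(2/\delta')}{2m}}.
\]
Note that no Rademacher contraction/composition step is needed, since the quantity on the right is exactly the empirical Rademacher complexity of the \emph{loss} class $\mathcal{G}$, which is precisely what the hypothesis of the lemma controls.

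The remainder is a union bound over three events, all measured over the joint draw of $(\mathcal{Z},\mathcal{C})$ and $\mathcal{S}$: (i) the event $C$ that the configuration $(\mathcal{W}^\star,\mathcal{B}^\star)$ returned by $\mathcal{A}$ on $\mathcal{S}$ lies in $\mathcal{H}_{m,\delta/2}[\mathcal{D},(\mathcal{Z},\mathcal{C}),\mathcal{A}]$, which holds with probability at least $1-\delta/2$ by Definition~\ref{def:effective-capacity}; (ii) the event $R$ that the empirical Rademacher complexity on the right‑hand side above is $\mathcal{O}(\mathrm{poly}(1/m,\log(1/\delta)))$, which holds with probability at least $1-\delta/2$ by hypothesis; and (iii) the event $U$ that the displayed inequality holds, with $\delta'$ chosen polynomially related to $\delta$. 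On $C\cap R\cap U$ the trained network $(\mathcal{W}^\star,\mathcal{B}^\star)$ is one of the arguments of the supremum, so its generalization gap is at most the right‑hand side of the display, which on $R$ is $\mathcal{O}(\mathrm{poly}(1/m,\log(1/\delta)))+3\sqrt{\log(2/\delta')/(2m)}=\mathcal{O}(\mathrm{poly}(1/m,\log(1/\delta)))$.

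The only genuinely non‑mechanical point — and hence what I would flag as the main obstacle — is the conditioning step above: it is essential that $\mathcal{H}_{m,\delta/2}$ be \emph{sample‑independent}, so that it can serve as a fixed hypothesis class even though the \emph{same} sample $\mathcal{S}$ defines $(\mathcal{W}^\star,\mathcal{B}^\star)$ and is used to estimate the Rademacher complexity; otherwise the symmetrization argument would be circular. This is exactly the design feature built into Definition~\ref{def:effective-capacity}. Everything else is bookkeeping: the naive union bound yields failure probability $\delta/2+\delta/2+\delta'$, so to reach exactly $1-\delta$ one re‑runs the argument with $\delta/3$ (say) in place of $\delta/2$ throughout, which only changes constants absorbed into the $\mathrm{poly}$, and the textbook proof of the displayed inequality is not reproduced.
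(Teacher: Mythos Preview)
Your proposal is correct and follows essentially the same approach as the paper: invoke the standard Rademacher uniform-convergence bound for the (initialization-conditioned) class $\mathcal{H}$, then union-bound with the defining event that $(\mathcal{W}^\star,\mathcal{B}^\star)\in\mathcal{H}$. If anything, you are more careful than the paper's own proof---you make explicit the crucial point that $\mathcal{H}_{m,\delta/2}$ is sample-independent once $(\mathcal{Z},\mathcal{C})$ is fixed, and you account separately for the third event $U$ (the paper tacitly folds events $R$ and $U$ into a single $\delta/2$, which is harmless here only because everything is stated up to $\mathcal{O}(\mathrm{poly}(1/m,\log(1/\delta)))$).
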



\begin{proof}
We will use $\mathcal{H}$ to denote $\mathcal{H}_{m,\delta}[\mathcal{D}, (\mathcal{Z}, \mathcal{C}), \mathcal{A}]$ in short. Furthermore, we will use $\mathcal{H}$ to refer to the set of parameter configurations or the set of functions/hypotheses corresponding to those configurations interchangeably.

Consider a random draw of the initialization and $m$ samples $\mathcal{S}$. The precondition in the above lemma is essentially a  bound on the complexity of $\mathcal{H}$ that is independent of $H$. When this bound holds, then from standard results in learning theory, this implies ``uniform convergence'' of the loss function for every parameter in the space $\mathcal{H}$. That is, for {\em every} parameter $(\mathcal{W}, \mathcal{B}) \in \mathcal{H}$, we have that the empirical estimate of its loss on $\mathcal{S}$ approximates the expected value with an $H$-independent error. Formally, with probability $1- \delta/2$, $\forall (\mathcal{W}, \mathcal{B}) \in \mathcal{H}$
\[
\mathbb{E}_{(\vec{x},y) \sim \mathcal{D}} \left[ \mathcal{L}(f_{(\mathcal{W}, \mathcal{B})}(\vec{x}), y)\right]- \frac{1}{m}\sum_{i=1}^{m} \mathcal{L}(f_{(\mathcal{W}, \mathcal{B})}(\vec{x}_i), y_i) = \mathcal{O} \left( poly\left(\frac{1}{m} , \log \frac{1}{\delta}\right)\right)
\] 

Next, by the definition of $\mathcal{H}$ we have that with probability $1-\delta/2$, $(\mathcal{W}^\star, \mathcal{B}^\star)$ falls in $\mathcal{H}$. Therefore, by instantiating the above expression for $(\mathcal{W}^\star, \mathcal{B}^\star)$, we have our generalization bound which by the union bound, holds with probability $1-\delta$.
\end{proof}

\subsection{Proofs from Section~\ref{sec:linear-network}}
Next, we present the proof for our result on $H$-independent bound on the complexity of linear networks that lie within fixed distance from a given random initialization. 

\linearnetwork*

\begin{proof}
Crucial to our proof is Corollary~\ref{cor:bounds}. Specifically, for the random initialization $\mathcal{Z}$, with high probability, we can bound the spectral norms of all the $H\times H$ matrices in $\mathcal{Z}$ as $\| \vec{Z}_k \|_2 = \tilde{\Theta}(1)$ (for $1 < k\leq d$) and $\| \vec{Z}_1\|_2 =  \tilde{\Theta}(\sqrt{n})$. We present these and a few other relevant bounds in Corollary~\ref{cor:bounds}.

Now our approach is to remove the network parameters in the expression for the Rademacher complexity layer by layer while applying this bound. For shorthand, we will simply write $\sup$ to denote the supremum over the space ${(\mathcal{W}, \mathcal{B}): \| (\mathcal{Z}, \mathcal{C}) -  (\mathcal{W}, \mathcal{B})\| \leq r }$. Then, we get the following recursive bound for $k > 1$:

\begin{align*}
 \mathbb{E}_{\vec{\xi}} \left[ \sup  \left \| \sum_{i=1}^{m}  \xi_i   f^{(k)}_{(\mathcal{W}, \mathcal{B})}(\vec{x}_i) \right\| \right]  & =  \mathbb{E}_{\vec{\xi}} \left[ \sup  \left \| \sum_{i=1}^{m}  \xi_i \left( \vec{W}_{k} f^{(k-1)}_{(\mathcal{W}, \mathcal{B})}(\vec{x}_i) + \vec{b}_{k} \right)\right\| \right] \\
 & \leq \mathbb{E}_{\vec{\xi}} \left[ \sup  \left \| \sum_{i=1}^{m}  \xi_i \vec{W}_{k} f^{(k-1)}_{(\mathcal{W}, \mathcal{B})}(\vec{x}_i)  \right\| \right] + \mathbb{E}_{\vec{\xi}} \left[ \sup  \left \| \sum_{i=1}^{m}  \xi_i  \vec{b}_{k} \right\| \right] \\
 & \leq \mathbb{E}_{\vec{\xi}} \left[ \sup \|\vec{W}_{k} \|_2  \left \| \sum_{i=1}^{m}  \xi_i  f^{(k-1)}_{(\mathcal{W}, \mathcal{B})}(\vec{x}_i)  \right\| \right] + \mathbb{E}_{\vec{\xi}} \left[ \sup  \left \| \sum_{i=1}^{m}  \xi_i  \vec{b}_{k} \right\| \right] \\
 & \leq \tilde{\mathcal{O}}(\|\vec{W}_k - \vec{Z}_k \|_F + \| \vec{Z}_k \|_2)   \mathbb{E}_{\vec{\xi}} \left[ \sup \left \| \sum_{i=1}^{m}  \xi_i  f^{(k-1)}_{(\mathcal{W}, \mathcal{B})}(\vec{x}_i)  \right\| \right] \\
 & +  \tilde{\mathcal{O}}(\|\vec{b}_k \|_2 )  \mathbb{E}_{\vec{\xi}} \left[ \sup  \left \| \sum_{i=1}^{m}  \xi_i  \right\| \right] \\
 & \leq   \tilde{\mathcal{O}}(r + \|\vec{Z}_k \|_2)   \mathbb{E}_{\vec{\xi}} \left[ \left \| \sum_{i=1}^{m}  \xi_i  f^{(k-1)}_{(\mathcal{W}, \mathcal{B})}(\vec{x}_i)  \right\| \right] +  \tilde{\mathcal{O}}(r ) \sqrt{m} \\
\end{align*}

Above, we have used the Khintchine-Kahane inequality (see Theorem~\ref{thm:kk} and Corollary~\ref{cor:kk})  to bound $\mathbb{E}_{\vec{\xi}} \left[ \sup  \left \| \sum_{i=1}^{m}  \xi_i  \right \|  \right] $. For $k=0$, we get:
\begin{align*}
\mathbb{E}_{\vec{\xi}} \left[ \sup  \left \| \sum_{i=1}^{m}  \xi_i   f^{(k)}_{(\mathcal{W}, \mathcal{B})}(\vec{x}_i) \right\| \right]   = \mathbb{E}_{\vec{\xi}} \left[ \sup  \left \| \sum_{i=1}^{m}  \xi_i   \vec{x}_i \right\| \right]  \leq \mathcal{O}(\sqrt{\sum \|\vec{x}_i \|^2}) \leq \tilde{\mathcal{O}}(\sqrt{m} \max_i \| \vec{x}\|_i)
\end{align*}

Here again we have used the Khintchine-Kahane inequality to bound $\mathbb{E}_{\vec{\xi}} \left[ \sup  \left \| \sum_{i=1}^{m}  \xi_i   \vec{x}_i \right\| \right] $. Finally, our claim then follows from repeated application of these recursive bounds. We have included the linear factor of $d$ to account for the term $ \tilde{\mathcal{O}}(r  \sqrt{m} )$ that is added due to the biases in each layer. Similarly the constant $c$ corresponds to the constant within the asymptotic bounds obtained in each recursive application of the above bound. 
\end{proof}

\subsection{Proofs from Section~\ref{sec:bad-norms}}

Now, we present our proof for why the generalization error of the random initialization is independent of $H$, while its $\ell_2$ norm grows with $H$.

\badnorms*

\begin{proof} 
In the terminology of Definition~\ref{def:effective-capacity},  for a random initialization, the effective capacity when the algorithm is simply one which outputs the initialization itself, is the singleton set consisting of that initialization i.e., $\mathcal{H}_{m,\delta}[\mathcal{D},(\mathcal{Z},\mathcal{C}),\mathcal{A}] = \{(\mathcal{Z}, \mathcal{C}) \}$. The generalization error of this algorithm then follows from a simple application of standard concentration inequalities of bounded random variables, with the random variables here being the loss of this network on a $m$ random i.i.d `training' input from the underlying distribution. For standard 0-1 error, this random variable is by default bounded. We can also show that the squared error loss is bounded to a width-independent value, since the output of this randomly initialized network is bounded independent of $H$ (Theorem~\ref{thm:output-bound}). Thus the generalization error of this network is width-independent.

The second part of our claim follows directly from the Frobenius norm bounds in Corollary~\ref{cor:bounds}.
\end{proof}

Next, we show why regularization of distance from initialization is more powerful than regularization of spectral norms, in terms of its ability to explain generalization independent of the number of hidden units per layer $H$. 

\spectral*

\begin{proof}

For any $k$, we have that $\| \vec{W}_{k} \|_2 \leq \| \vec{Z}_{k}\|_2 + \|\vec{Z}_k - \vec{W}_k \|_F \leq  \| \vec{Z}_{k}\|_2+ \|(\mathcal{W}, \mathcal{B}) - (\mathcal{Z},\mathcal{C})\|_F$.  The result then immediately follows from the spectral norm bounds in Corollary~\ref{cor:bounds}.

\end{proof}

\section{Some $H$-independent bounds for ReLU networks}
\label{app:cool-properties}
In this section we lay out two useful properties of neural networks in terms of how far away their weight are from their random initialization. In particular, we show that both the output and the gradient of a network with respect to its parameters is bounded purely by the distance from its random initialization and not on the number of hidden units. As always, we assume that the initialization $\mathcal{Z}$ is according to Xavier initialization (i.e., in this case the weights are drawn from a zero-mean gaussian with standard deviation $\mathcal{O}({1}/{\sqrt{H}})$) and $\mathcal{C}$ is zero. 

\begin{theorem}
\label{thm:output-bound}
$|f_{(\mathcal{W}, \mathcal{B})}(\vec{x})| \leq  \tilde{\mathcal{O}}(c^d(r+1)^{d}(\|\vec{x} \|+1)) $, where $r = \| (\mathcal{Z}, \mathcal{C}) - (\mathcal{W},\mathcal{B}) \|_F$ and $c=\tilde{\mathcal{O}}(1)$.
\end{theorem}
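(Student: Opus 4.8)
The plan is a straightforward layer-by-layer induction on the norm of the pre-activations $f^{(k)}_{(\mathcal{W},\mathcal{B})}(\vec{x})$, mirroring the recursion in the proof of Theorem~\ref{thm:linear-network} but now tracking a single input rather than a Rademacher sum, and using the non-expansiveness of ReLU to strip off the activations. Write $g_k := \| f^{(k)}_{(\mathcal{W},\mathcal{B})}(\vec{x})\|$, so that $g_0 = \|\vec{x}\|$ and the quantity to be bounded is $g_d = |f_{(\mathcal{W},\mathcal{B})}(\vec{x})|$ (recall $\vec{W}_d$ is a single row, so the last pre-activation is a scalar).

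First I would establish the one-step recursion. By definition $f^{(k)}_{(\mathcal{W},\mathcal{B})}(\vec{x}) = \vec{W}_k \phi(f^{(k-1)}_{(\mathcal{W},\mathcal{B})}(\vec{x})) + \vec{b}_k$, so the triangle inequality and submultiplicativity of the spectral norm give $g_k \le \|\vec{W}_k\|_2\,\|\phi(f^{(k-1)}_{(\mathcal{W},\mathcal{B})}(\vec{x}))\| + \|\vec{b}_k\|$. Since $\phi$ is ReLU it is coordinatewise non-expansive about the origin, i.e. $\|\phi(\vec{v})\|^2 = \sum_j \max(v_j,0)^2 \le \sum_j v_j^2 = \|\vec{v}\|^2$, hence $g_k \le \|\vec{W}_k\|_2\, g_{k-1} + \|\vec{b}_k\|$.

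Next I would convert the weight norms into distances from initialization. Because $\mathcal{C}=0$ we have $\|\vec{b}_k\| = \|\vec{b}_k - \vec{c}_k\| \le \|(\mathcal{W},\mathcal{B})-(\mathcal{Z},\mathcal{C})\|_F = r$, and similarly $\|\vec{W}_k\|_2 \le \|\vec{Z}_k\|_2 + \|\vec{W}_k - \vec{Z}_k\|_2 \le \|\vec{Z}_k\|_2 + \|\vec{W}_k - \vec{Z}_k\|_F \le \|\vec{Z}_k\|_2 + r$. Invoking Corollary~\ref{cor:bounds}, with high probability over the random initialization $\|\vec{Z}_k\|_2 = \tilde{\mathcal{O}}(1)$ for every layer (any residual input-dimension factor from the first layer being absorbed into the $\tilde{\mathcal{O}}(\cdot)$), so $\|\vec{W}_k\|_2 \le \tilde{\mathcal{O}}(1) + r =: a$ with $a = \tilde{\mathcal{O}}(1+r)$. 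Substituting back, $g_k \le a\, g_{k-1} + r$, and since $r \le a$ (the hidden constant in $\tilde{\mathcal{O}}(1)$ may be taken $\ge 1$) this gives the clean form $g_k + 1 \le a(g_{k-1}+1)$.

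Finally, unrolling this recursion $d$ times yields $g_d \le g_d + 1 \le a^d(g_0+1) = a^d(\|\vec{x}\|+1)$, and writing $a = \tilde{\mathcal{O}}(1) + r \le c(r+1)$ for a suitable $c = \tilde{\mathcal{O}}(1)$ absorbing the per-layer spectral-norm constant gives $|f_{(\mathcal{W},\mathcal{B})}(\vec{x})| \le \tilde{\mathcal{O}}\!\left(c^d(r+1)^d(\|\vec{x}\|+1)\right)$, as claimed. There is no serious obstacle here beyond bookkeeping: the only points requiring care are (i) that the bound is probabilistic, inheriting the ``with high probability over the random initialization'' qualifier from Corollary~\ref{cor:bounds}, and (ii) correctly folding the bias contributions and the first-layer spectral-norm constant into the $\tilde{\mathcal{O}}$ so that the final expression matches the stated form with no free $n$-dependence.
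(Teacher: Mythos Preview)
Your proposal is correct and follows essentially the same route as the paper: a layer-by-layer recursion on $\|f^{(k)}_{(\mathcal{W},\mathcal{B})}(\vec{x})\|$, using ReLU non-expansiveness to drop the activation, the triangle inequality $\|\vec{W}_k\|_2 \le \|\vec{Z}_k\|_2 + \|\vec{W}_k-\vec{Z}_k\|_F \le \tilde{\mathcal{O}}(1)+r$ via Corollary~\ref{cor:bounds}, and $\|\vec{b}_k\|\le r$, then unrolling. Your $g_k+1 \le a(g_{k-1}+1)$ bookkeeping is a slightly cleaner way to absorb the additive bias term than the paper's, but the argument is otherwise identical.
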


\begin{proof}
We will bound the magnitude of the output $f^{(k)}_{(\mathcal{W}, \mathcal{B})}(\vec{x})$ as follows: 

\begin{align*}
\| f^{(k)}_{(\mathcal{W}, \mathcal{B})}(\vec{x}) \| & =  \| \vec{W}_{k}\phi(f^{(k-1)}_{(\mathcal{W}, \mathcal{B})}(\vec{x}))  + \vec{b}_{k} \|  \leq \|\vec{W}_{k}\phi(f^{(k-1)}_{(\mathcal{W}, \mathcal{B})}(\vec{x})) \| + \|\vec{b}_k \|\\ 
& \leq \|\vec{W}_k \|_2 \| \phi(f^{(k-1)}_{(\mathcal{W}, \mathcal{B})}(\vec{x})) \|_2 + r \\
& \leq (\|\vec{W}_k  - \vec{Z}_k \|_F + \| \vec{Z}_k \|_2) \| f^{(k-1)}_{(\mathcal{W}, \mathcal{B})}(\vec{x}) \|_2 + r \\
& \leq \mathcal{O}(r + 1 ) \| f^{(k-1)}_{(\mathcal{W}, \mathcal{B})}(\vec{x}) \|_2 + r \\
\end{align*}

In the third line above, we use the fact that for any scalar value, $|\phi(x)| \leq |x|$ as $\phi$ is the ReLU activation. Following that, we use the spectral norm bounds from Corollary~\ref{cor:bounds}. 
Our bound then follows from repeated applications of these bounds recursively. Note that the value $c$ corresponds to the constant present in the asymptotic bound applied in each recursion.
\end{proof}

As a corrollary, we can bound the initial squared error loss of the network on a set of datapoints, independent of $H$:
\begin{corollary}
\label{cor:loss-bound}
Let ${(\vec{x}_1, {y}_1), \hdots, (\vec{x}_m, {y}_m)}$ be a set of training datapoints. For a randomly initialized network of any size, with high probability, the initial loss can be bounded independent of $H$ as
\[
\frac{1}{m}\sum_{i=1}^{m} (f_{(\mathcal{Z}, \mathcal{C})}(\vec{x}) - y)^2 \leq \left(\tilde{\mathcal{O}}(c^d (\max_{i}\|\vec{x}_i \|+1)) + \max_i |y_i| \right)^2
\]
\end{corollary}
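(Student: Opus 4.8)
The statement to prove is Corollary~\ref{cor:loss-bound}, bounding the initial squared-error loss of a randomly initialized network on a dataset, independent of $H$.

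\textbf{Proof plan.} The plan is to combine the output bound of Theorem~\ref{thm:output-bound} with the assumption $\mathcal{C} = \mathcal{B}$ in the initialization, so that the ``distance from initialization'' term vanishes. First I would instantiate Theorem~\ref{thm:output-bound} at the parameter configuration $(\mathcal{W}, \mathcal{B}) = (\mathcal{Z}, \mathcal{C})$: here $r = \|(\mathcal{Z},\mathcal{C}) - (\mathcal{Z},\mathcal{C})\|_F = 0$, so the bound collapses to $|f_{(\mathcal{Z},\mathcal{C})}(\vec{x})| \leq \tilde{\mathcal{O}}(c^d(\|\vec{x}\| + 1))$ for a fixed input $\vec{x}$, holding with high probability over the random draw of $\mathcal{Z}$ (the high-probability event is exactly the one in Corollary~\ref{cor:bounds} on the spectral norms, which does not depend on the data). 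Since this event is a single event over the initialization, it simultaneously controls $|f_{(\mathcal{Z},\mathcal{C})}(\vec{x}_i)|$ for \emph{all} $m$ datapoints, giving $|f_{(\mathcal{Z},\mathcal{C})}(\vec{x}_i)| \leq \tilde{\mathcal{O}}(c^d(\max_i\|\vec{x}_i\| + 1))$ for every $i$ with no union bound over $m$ needed.

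\textbf{Finishing the calculation.} Next I would bound each term of the empirical loss pointwise. For each $i$, $(f_{(\mathcal{Z},\mathcal{C})}(\vec{x}_i) - y_i)^2 \leq (|f_{(\mathcal{Z},\mathcal{C})}(\vec{x}_i)| + |y_i|)^2 \leq (\tilde{\mathcal{O}}(c^d(\max_i\|\vec{x}_i\| + 1)) + \max_i |y_i|)^2$ by the triangle inequality. Averaging this uniform bound over $i = 1, \dots, m$ leaves the right-hand side unchanged, yielding the claimed inequality
\[
\frac{1}{m}\sum_{i=1}^{m} (f_{(\mathcal{Z}, \mathcal{C})}(\vec{x}_i) - y_i)^2 \leq \left(\tilde{\mathcal{O}}(c^d (\max_{i}\|\vec{x}_i \|+1)) + \max_i |y_i| \right)^2.
\]

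\textbf{Main obstacle.} There is essentially no hard step here — the corollary is a direct specialization of Theorem~\ref{thm:output-bound}. The only point requiring a moment's care is the quantifier structure: one must observe that the high-probability event in Theorem~\ref{thm:output-bound} is an event purely about the spectral norms of the initialization matrices (Corollary~\ref{cor:bounds}), so it is the \emph{same} event for all inputs and all $m$ datapoints, and hence no degradation of the failure probability with $m$ occurs. After that, the argument is just the triangle inequality and averaging a constant.
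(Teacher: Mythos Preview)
Your proposal is correct and matches the paper's approach: the corollary is stated immediately after Theorem~\ref{thm:output-bound} with no separate proof, the implicit argument being exactly the specialization $r=0$ plus the triangle inequality that you spell out. Your observation that the high-probability event from Corollary~\ref{cor:bounds} is a single data-independent event (so no union bound over $m$ is needed) is a nice clarification that the paper leaves implicit.
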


Next, we bound the gradient of the function with respect to the parameters $\mathcal{W}$, independent of $H$. 
\begin{theorem}
\[
\left\| \frac{ \partial f_{(\mathcal{W}, \mathcal{B})}(\vec{x})}{\partial \mathcal{W}}\right\|   \leq  \tilde{\mathcal{O}}( d c^d(r+1)^{d}(\|\vec{x} \|+1)) 
\]
where $r = \| (\mathcal{Z}, \mathcal{C}) - (\mathcal{W},\mathcal{B}) \|_F$ and $c=\tilde{\mathcal{O}}(1)$. 
\end{theorem}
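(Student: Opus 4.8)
The plan is to apply the chain rule layer by layer, and reduce the claim to two ingredients already available: the forward bound from Theorem~\ref{thm:output-bound} and a backward (backpropagation) bound obtained from the spectral-norm estimates in Corollary~\ref{cor:bounds}. I would view the gradient as the tuple $\{ \partial f_{(\mathcal{W},\mathcal{B})}(\vec{x}) / \partial \vec{W}_k \}_{k=1}^{d}$, so that, consistent with the paper's convention, $\| \partial f_{(\mathcal{W},\mathcal{B})}(\vec{x}) / \partial \mathcal{W} \|^2 = \sum_{k=1}^{d} \| \partial f_{(\mathcal{W},\mathcal{B})}(\vec{x}) / \partial \vec{W}_k \|_F^2$. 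Since the output is a scalar, the chain rule gives $\partial f / \partial \vec{W}_k = \vec{g}_k \, \phi(f^{(k-1)}_{(\mathcal{W},\mathcal{B})}(\vec{x}))^{T}$, where $\vec{g}_k := \transpose{\partial f / \partial f^{(k)}_{(\mathcal{W},\mathcal{B})}(\vec{x})}$ is the backpropagated error vector at layer $k$ (with $f^{(0)}_{(\mathcal{W},\mathcal{B})}(\vec{x}) = \vec{x}$ and no activation on the input layer, so that for $k=1$ the factor is $\vec{x}$ itself). Hence $\| \partial f / \partial \vec{W}_k \|_F = \| \vec{g}_k \| \cdot \| \phi(f^{(k-1)}_{(\mathcal{W},\mathcal{B})}(\vec{x})) \|$, and it remains to bound each of the two factors.

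For the forward factor, the proof of Theorem~\ref{thm:output-bound} already establishes $\| f^{(k-1)}_{(\mathcal{W},\mathcal{B})}(\vec{x}) \| \le \tilde{\mathcal{O}}(c^{k-1}(r+1)^{k-1}(\|\vec{x}\|+1))$ for every $k$, and since $\phi$ is $1$-Lipschitz with $\phi(0)=0$ we have $\| \phi(f^{(k-1)}) \| \le \| f^{(k-1)} \|$. For the backward factor, note $\vec{g}_d = 1$ and, unrolling backpropagation, $\vec{g}_{k-1} = \mathrm{diag}(\phi'(f^{(k-1)})) \, \vec{W}_k^{T} \vec{g}_k$; since for ReLU the diagonal matrix $\mathrm{diag}(\phi')$ has entries in $\{0,1\}$ and hence operator norm at most $1$, we get $\| \vec{g}_{k-1} \| \le \| \vec{W}_k \|_2 \, \| \vec{g}_k \|$, whence $\| \vec{g}_k \| \le \prod_{j=k+1}^{d} \| \vec{W}_j \|_2$. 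Exactly as in the proof of the spectral-norm proposition, I would bound $\| \vec{W}_j \|_2 \le \| \vec{Z}_j \|_2 + \| \vec{W}_j - \vec{Z}_j \|_F \le \tilde{\mathcal{O}}(c)(r+1)$ using Corollary~\ref{cor:bounds}, so that $\| \vec{g}_k \| \le \tilde{\mathcal{O}}(c^{d-k}(r+1)^{d-k})$.

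Multiplying the two factors, each summand satisfies $\| \vec{g}_k \| \cdot \| \phi(f^{(k-1)}) \| \le \tilde{\mathcal{O}}(c^{d-1}(r+1)^{d-1}(\|\vec{x}\|+1))$, independently of $k$; summing the squares over the $d$ layers and taking a square root gives $\| \partial f / \partial \mathcal{W} \| \le \sqrt{d}\,\tilde{\mathcal{O}}(c^{d-1}(r+1)^{d-1}(\|\vec{x}\|+1))$, and since $r+1 \ge 1$ and $\sqrt{d} \le d$ this is at most $\tilde{\mathcal{O}}(d\,c^{d}(r+1)^{d}(\|\vec{x}\|+1))$, as claimed. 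The only real point requiring care — more a bookkeeping subtlety than a genuine obstacle — is propagating the chain rule through the non-smooth ReLU: one interprets the derivatives in the almost-everywhere (Clarke) sense, and the fact that makes everything go through is simply that the ReLU Jacobian is a coordinate projection, hence a contraction, so it neither amplifies the backpropagated signal nor introduces any dependence on $H$.
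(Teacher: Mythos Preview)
Your proposal is correct and follows essentially the same approach as the paper: both peel off layers via the chain rule, bound each weight matrix's spectral norm by $\tilde{\mathcal{O}}(r+1)$ using Corollary~\ref{cor:bounds}, use that the ReLU derivative is a $\{0,1\}$-diagonal (hence a contraction), and invoke the forward bound from Theorem~\ref{thm:output-bound} on the pre-activations. Your explicit rank-one factorization $\partial f/\partial \vec{W}_k = \vec{g}_k\,\phi(f^{(k-1)})^{T}$ with separate forward and backward estimates is a slightly cleaner packaging of the same recursive argument the paper unrolls layer by layer.
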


\begin{proof}
The derivative with respect to $\vec{W}_d$ is easy to bound:
\begin{align*}
\left\|\frac{ \partial f_{(\mathcal{W}, \mathcal{B})}(\vec{x})}{\partial \vec{W}_d} \right\| =  \|\phi(f^{(d-1)}_{(\mathcal{W}, \mathcal{B})}(\vec{x}))\| \leq \|f^{(d-1)}_{(\mathcal{W}, \mathcal{B})} (\vec{x})\|
\end{align*}

Above, we make use of the fact that for any scalar value $u$, $|\phi(u)| \leq |u|$. After applying the above inequality, $\|f^{(d-1)}_{(\mathcal{W}, \mathcal{B})}(\vec{x})\|$ can be bounded by the recursive bounds presented  in the proof of Theorem~\ref{thm:output-bound}. 

Next, for $\vec{W}_{k}$, we have that: 

\begin{align*}
\left\|\frac{ \partial f_{(\mathcal{W}, \mathcal{B})}(\vec{x})}{\partial \vec{W}_k} \right\| \leq  \left\| \vec{W}_{d} \frac{ \partial \phi(f^{(d-1)}_{(\mathcal{W}, \mathcal{B})}(\vec{x}))}{\partial \vec{W}_k}\right\| \leq \| \vec{W}_d\|_2 \left\| \frac{ \partial \phi(f^{(d-1)}_{(\mathcal{W}, \mathcal{B})}(\vec{x}))}{\partial \vec{W}_k}\right\| = \tilde{\mathcal{O}}\left((1+r)\left\| \frac{ \partial \phi(f^{(d-1)}_{(\mathcal{W}, \mathcal{B})}(\vec{x}))}{\partial \vec{W}_k}\right\|  \right)\\
\end{align*}
We have used the bound $\| \vec{W}_d\|_2 = \|\vec{Z}_d\|_2 + \| \vec{W}_d - \vec{Z}_d\|_F \leq \tilde{\mathcal{O}}(1+r)$.
Note that the last term above contains the derivative of a vector with respect to a matrix, the norm of which is essentially the norm of the gradient corresponding to every pair of term from the vector and the matrix. Now, to bound this term, we need to consider the case where $k=d-1$ and the case where $k < d-1$. However, instead of deriving the derivative for these particular cases, we will consider two more general cases, the first of which is below:
\begin{align*}
\left\| \frac{ \partial \phi( f^{(k)}_{(\mathcal{W}, \mathcal{B})}(\vec{x})) }{\partial \vec{W}_{k}}\right\|  = \left\| \frac{ \partial \phi( \vec{W}_{k} \phi(f^{(k-1)}_{(\mathcal{W}, \mathcal{B})}(\vec{x})))}{\partial \vec{W}_{k}} \right\| =  \left\|   \phi'( \vec{W}_{k} \phi(f^{(k-1)}_{(\mathcal{W}, \mathcal{B})}(\vec{x}))) \circ \phi(f^{(k-1)}_{(\mathcal{W}, \mathcal{B})}(\vec{x}) )\right\| \\
  \leq \|  \phi(f^{(k-1)}_{(\mathcal{W}, \mathcal{B})}(\vec{x}))  \leq \|f^{(k-1)}_{(\mathcal{W}, \mathcal{B})}(\vec{x}) \| 
\end{align*}

Here, $\circ$ denotes the element-wise product of two vectors. The last inequality follows from the fact that $\phi'$ is either $0$ or $1$ when $\phi$ is a ReLU activation. We can bound $\|  f^{(k-1)}_{(\mathcal{W}, \mathcal{B})}(\vec{x})\| $ with the recursive bounds presented in Theorem~\ref{thm:output-bound}.

Next, we consider the following case that remains, where $l < k$ and $k > 1$:
\begin{align*}
\left\| \frac{ \partial \phi( f^{(k)}_{(\mathcal{W}, \mathcal{B})}(\vec{x})) }{\partial \vec{W}_{l}}\right\| & = \left\| \frac{ \partial \phi( \vec{W}_{k} \phi(f^{(k-1)}_{(\mathcal{W}, \mathcal{B})}(\vec{x})))}{\partial \vec{W}_{l}} \right\| =  \left\|\phi'( \vec{W}_{k} \phi(f^{(k-1)}_{(\mathcal{W}, \mathcal{B})}(\vec{x})))  \circ  \vec{W}_{k}  \frac{ \partial  \phi(f^{(k-1)}_{(\mathcal{W}, \mathcal{B})}(\vec{x}) )}{\partial \vec{W}_{l}}\right\| \\
&\leq   \| \vec{W}_{k}\|_2\left\|   \frac{\partial  \phi(f^{(k-1)}_{(\mathcal{W}, \mathcal{B})}(\vec{x}) )}{\partial \vec{W}_{l}}\right\| \\
& \leq \tilde{\mathcal{O}} \left( (1+r)\left\| \frac{\partial \phi(f^{(k-1)}_{(\mathcal{W}, \mathcal{B})}(\vec{x}) )}{\partial \vec{W}_{l}} \right\|\right)
\end{align*}

For the sake of simplicity, we have abused notation here: in particular, in the second equality we have used $\circ$ to denote that each term in the first vector $\phi'$ is multiplied with a corresponding row in $\vec{W}_k$. Since the first vector is 0-1 vector, this results in a matrix with some rows zeroed out. The next inequality follows from the fact that the spectral norm of such a partially-zeroed-out matrix is at most the spectral norm of the original matrix.

Through these recursive bounds, we arrive at our claim.

\end{proof}

\section{Experiments}
\label{app:experiments}

All our experiments are performed on a  neural network with $4$ hidden layers (consisting of $H$ hidden units). We use SGD with a batch size of $64$; for different experiments we use different learning rates and momentum. For all the plots in the appendix, we use the CIFAR-10 and MNIST dataset with a squared error loss -- between a one-hot encoding of $k$ classes ($k=10$ here) and the output of the network --
formulated as $\mathcal{L}(\vec{f}(x), \vec{y}) = \frac{1}{k}\sum_{i=1}^{k} (f_{k}(x) - y_k)^2 $. Note that the plots in the main paper corresponding to optimizing the cross-entropy loss.

 Furthermore, while we do train the network to around $1\%$ of the initial loss for some of the experiments, for other experiments (including the one involving noisy labels) we only train the network until around $10\%$ of the original loss. While experiments in past work have studied the generalization phenomenon by training to zero or near-zero loss,  we note that it is still interesting to explore generalization without doing so because, even in this setting we still observe the unexplained phenomenon of ``non-increasing (or even, decreasing) generalization errors with increasing width''.

\subsection{Real data}
In this section, we present three sets of experiments by training A) on the MNIST data set using SGD with learning rate $0.01$ and momentum $0.9$ until the loss is less than $0.001$ , B) on the CIFAR-10 data set using SGD with learning rate $0.5$ until the loss is less than $0.02$ and C) on the MNIST data set but using using SGD with learning rate $1$. The corresponding plots can be found in Figures~\ref{fig:mnist-momentum}, ~\ref{fig:cifar} and ~\ref{fig:mnist}, where we show how the generalization error and the distance from the random initialization vary with either  the number of hidden units $H$ or the number of training samples $m$. Note that the $X$ axis is not linear but logarithmic. In some of the plots, we also use a logarithmic $Y$ axis to understand what is the power of $X$ which determines $Y$ i.e., what is $k$ if $Y \propto X^k$. Note that the generalization error only decreases or remains constant as we increase the number of hidden units.

\begin{figure}
        \begin{minipage}{.5\textwidth}
        \centering
        \includegraphics[scale=0.45]{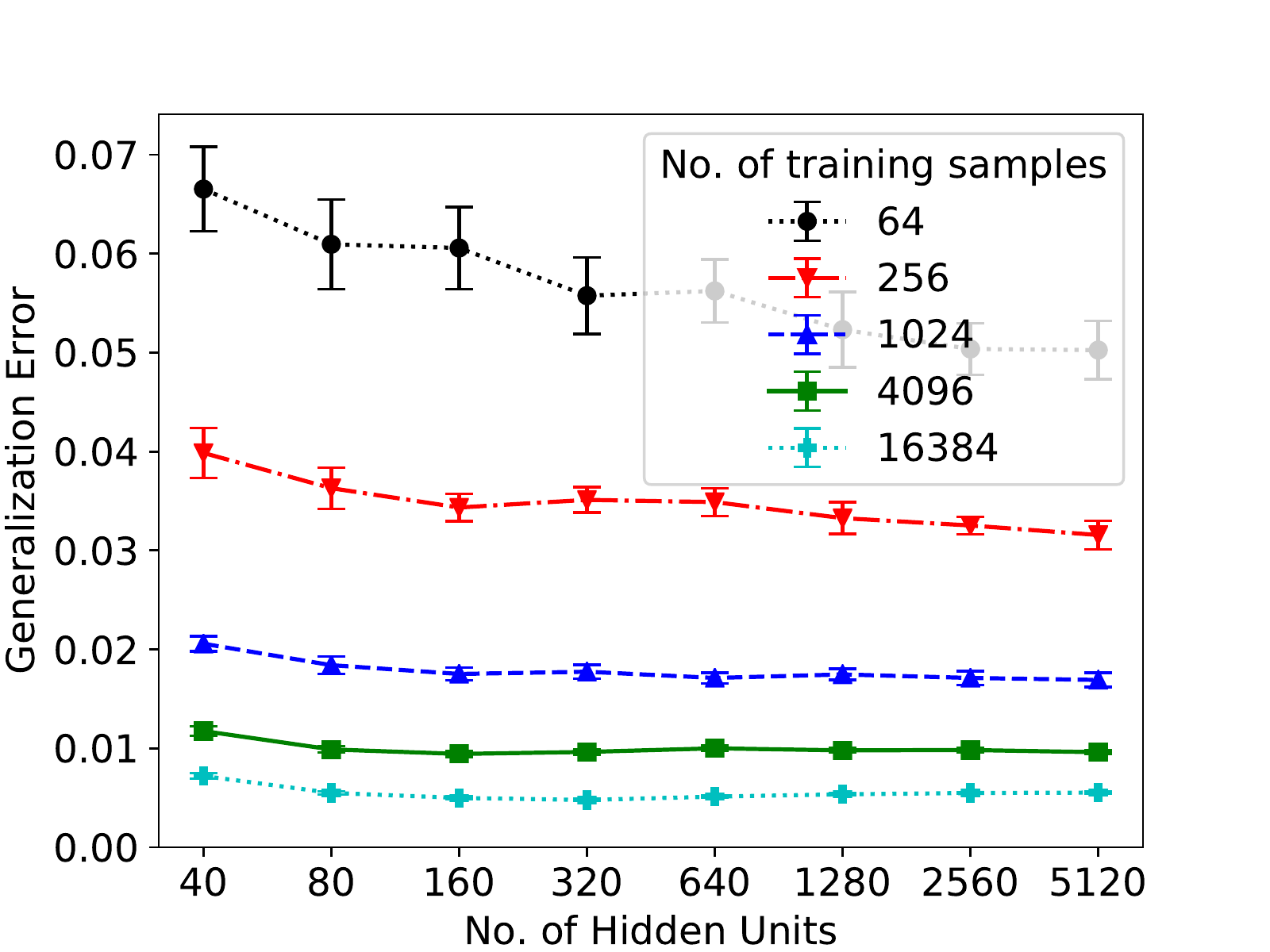}\\
        (a)
    \end{minipage}%
        \begin{minipage}{.5\textwidth}
        \centering
        \includegraphics[scale=0.45]{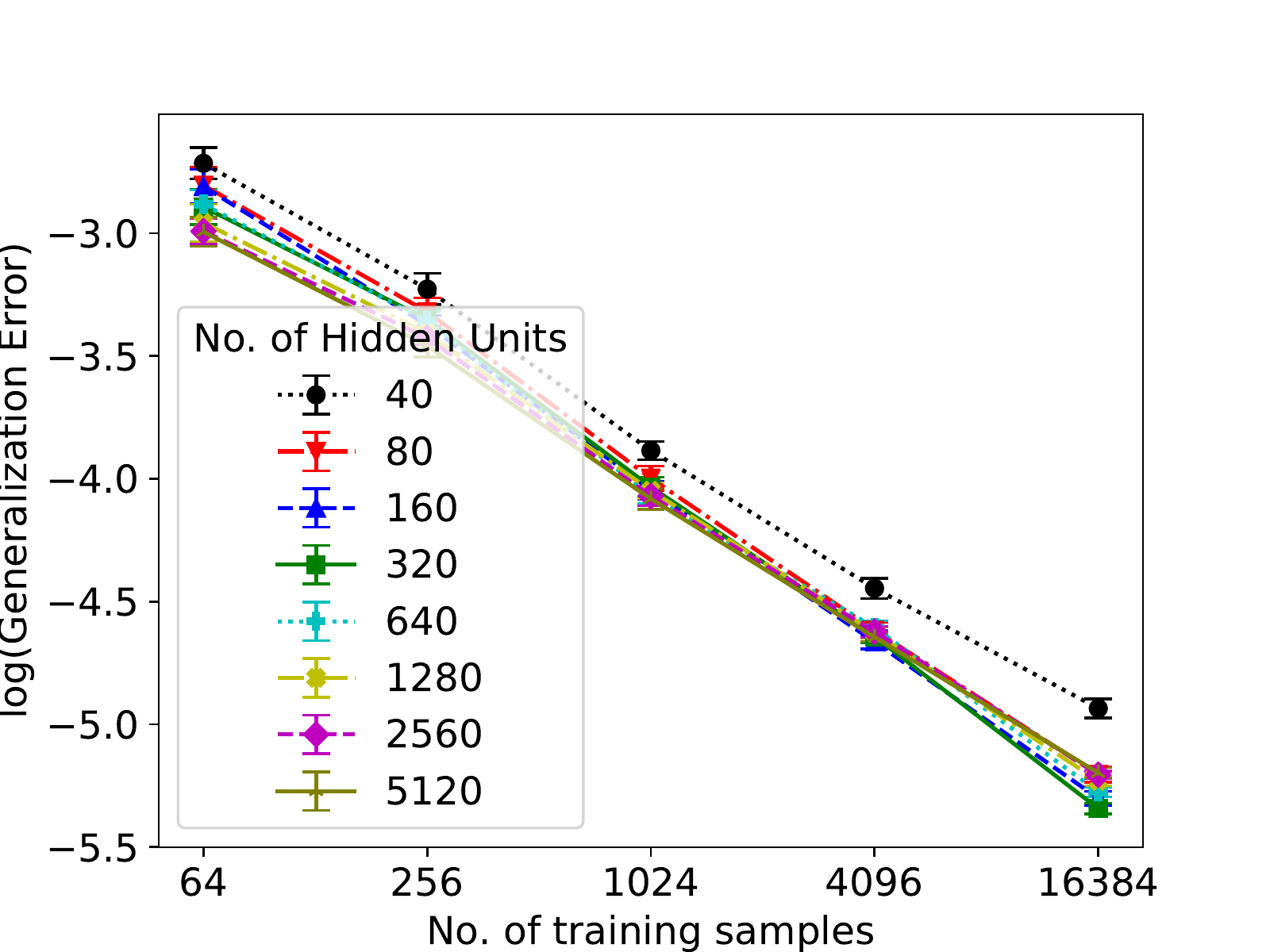}\\
        (b)
    \end{minipage}\hfill
               \begin{minipage}{.5\textwidth}
        \centering
        \includegraphics[scale=0.45]{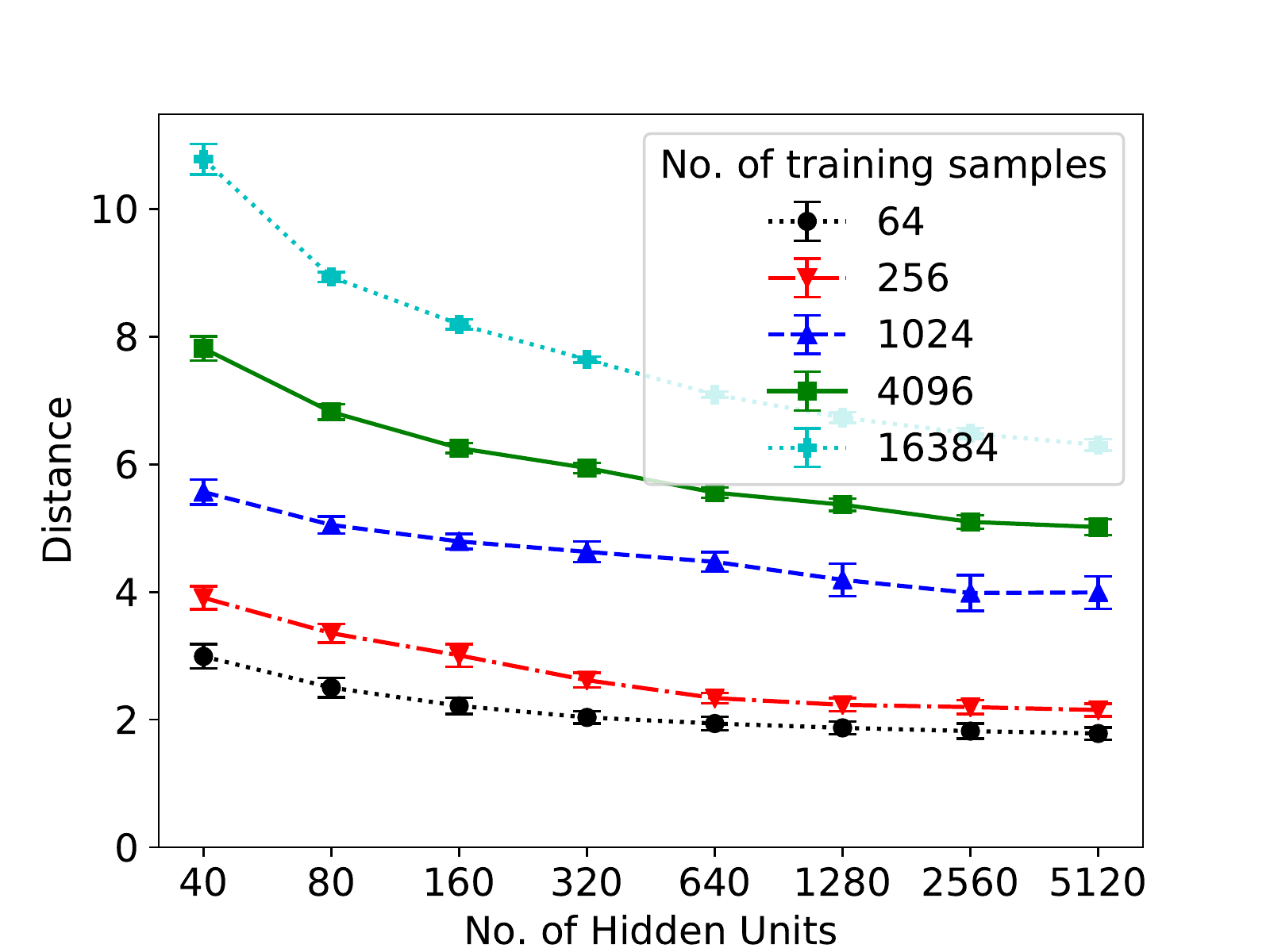} \\
		(c)	
	    \end{minipage}%
        \begin{minipage}{.5\textwidth}
        \centering
        \includegraphics[scale=0.45]{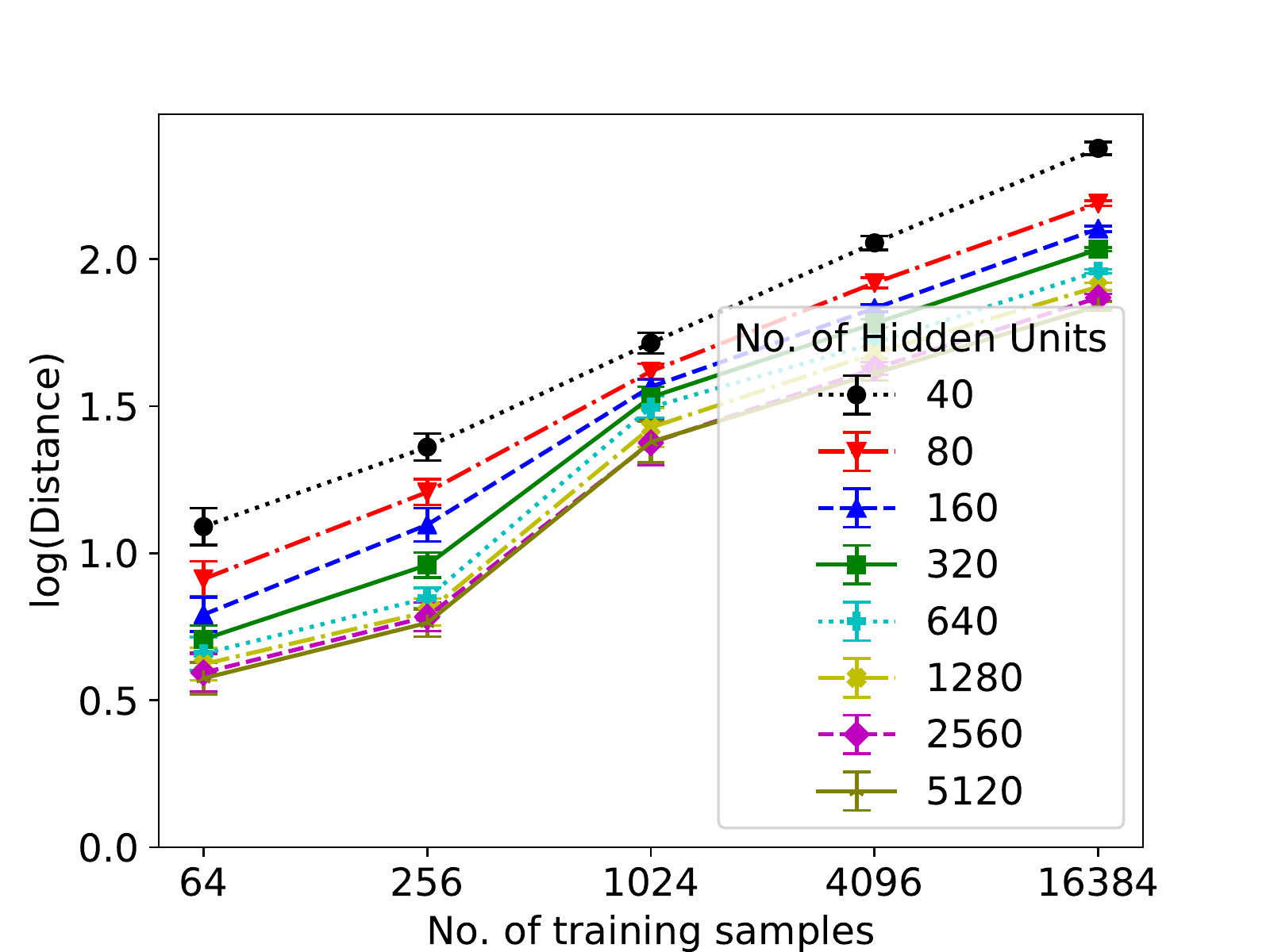}\\
	(d)
	    \end{minipage}\hfill
	\caption{Real-data: Experiment A on MNIST}
    \label{fig:mnist-momentum}
\end{figure}

\begin{figure}
        \begin{minipage}{.5\textwidth}
        \centering
        \includegraphics[scale=0.45]{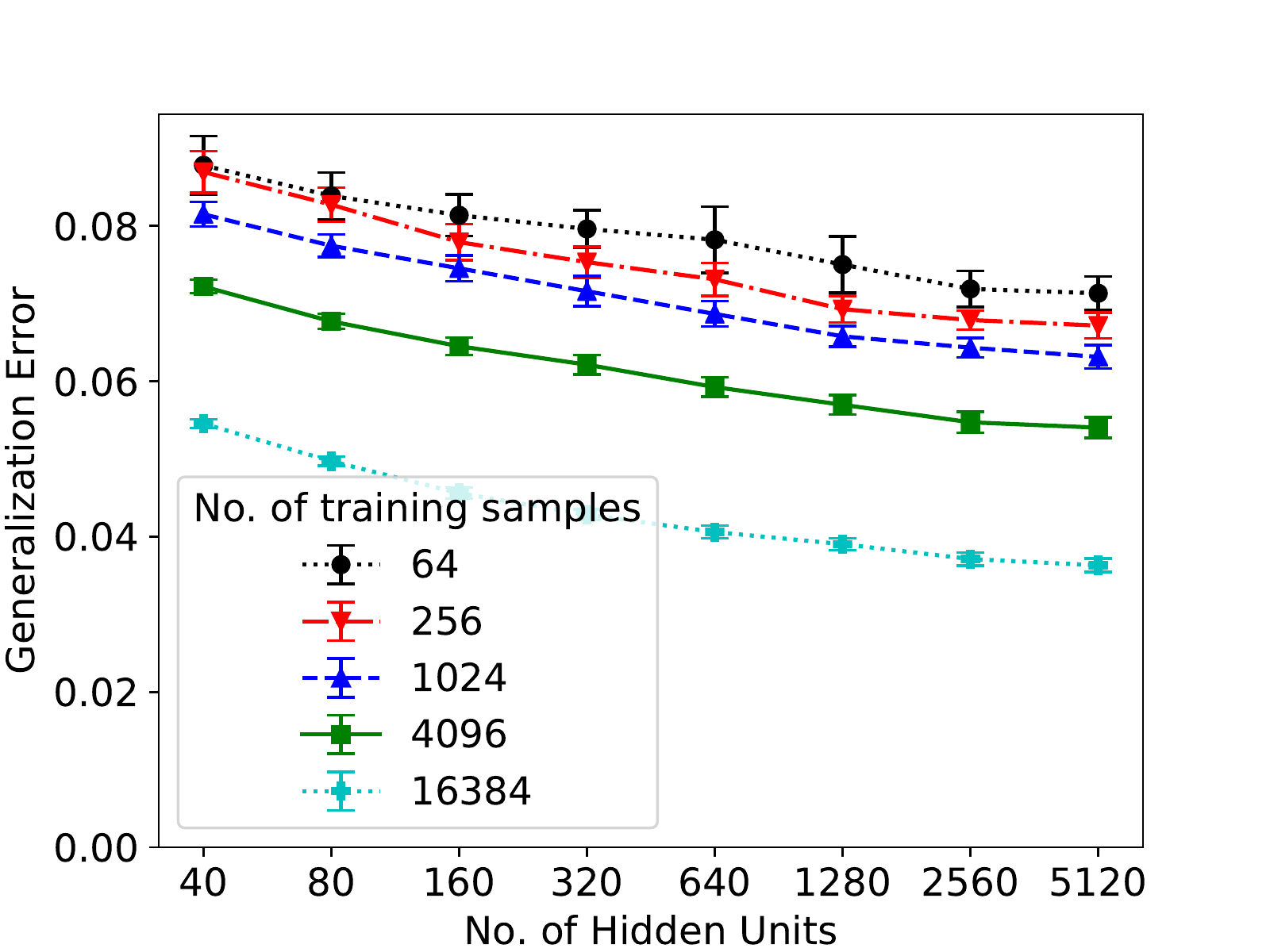}\\
        (a)
    \end{minipage}%
        \begin{minipage}{.5\textwidth}
        \centering
        \includegraphics[scale=0.45]{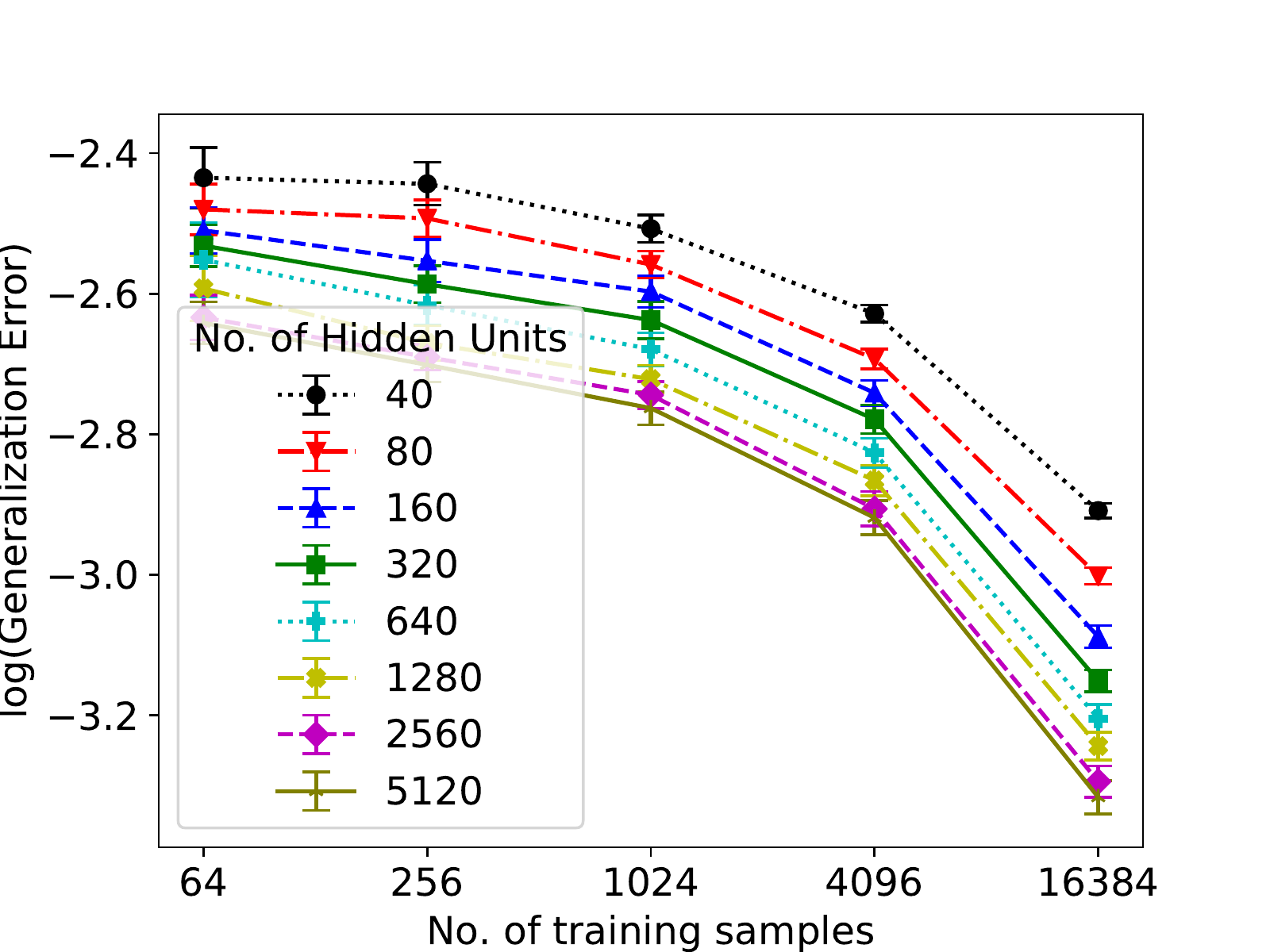}\\
        (b)
    \end{minipage}\hfill
               \begin{minipage}{.5\textwidth}
        \centering
        \includegraphics[scale=0.45]{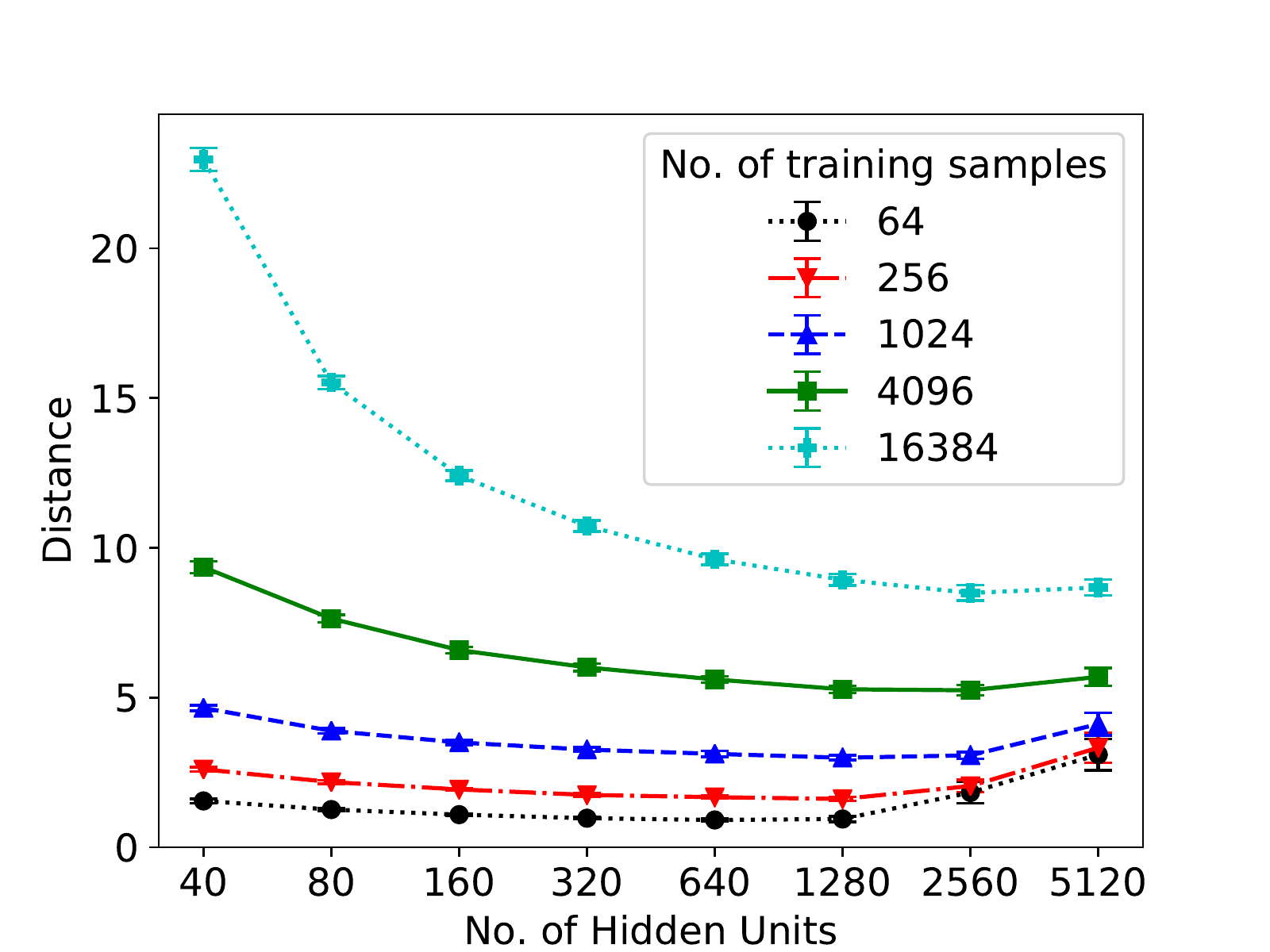} \\
		(c)	
	    \end{minipage}%
        \begin{minipage}{.5\textwidth}
        \centering
        \includegraphics[scale=0.45]{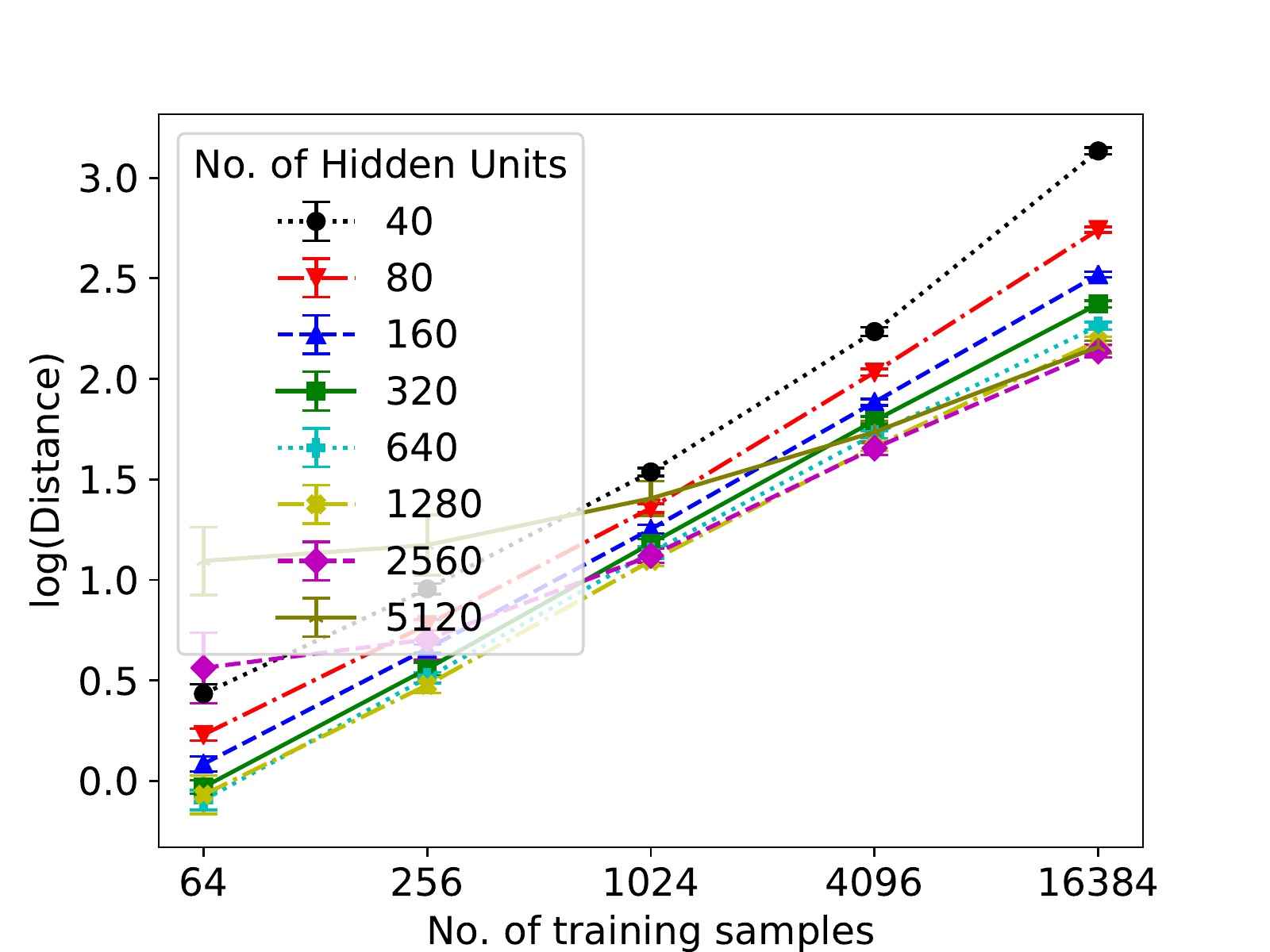}\\
	(d)
	    \end{minipage}\hfill
	\caption{Real-data: Experiment B on CIFAR-10}
    \label{fig:cifar}
\end{figure}

\begin{figure}
        \begin{minipage}{.5\textwidth}
        \centering
        \includegraphics[scale=0.45]{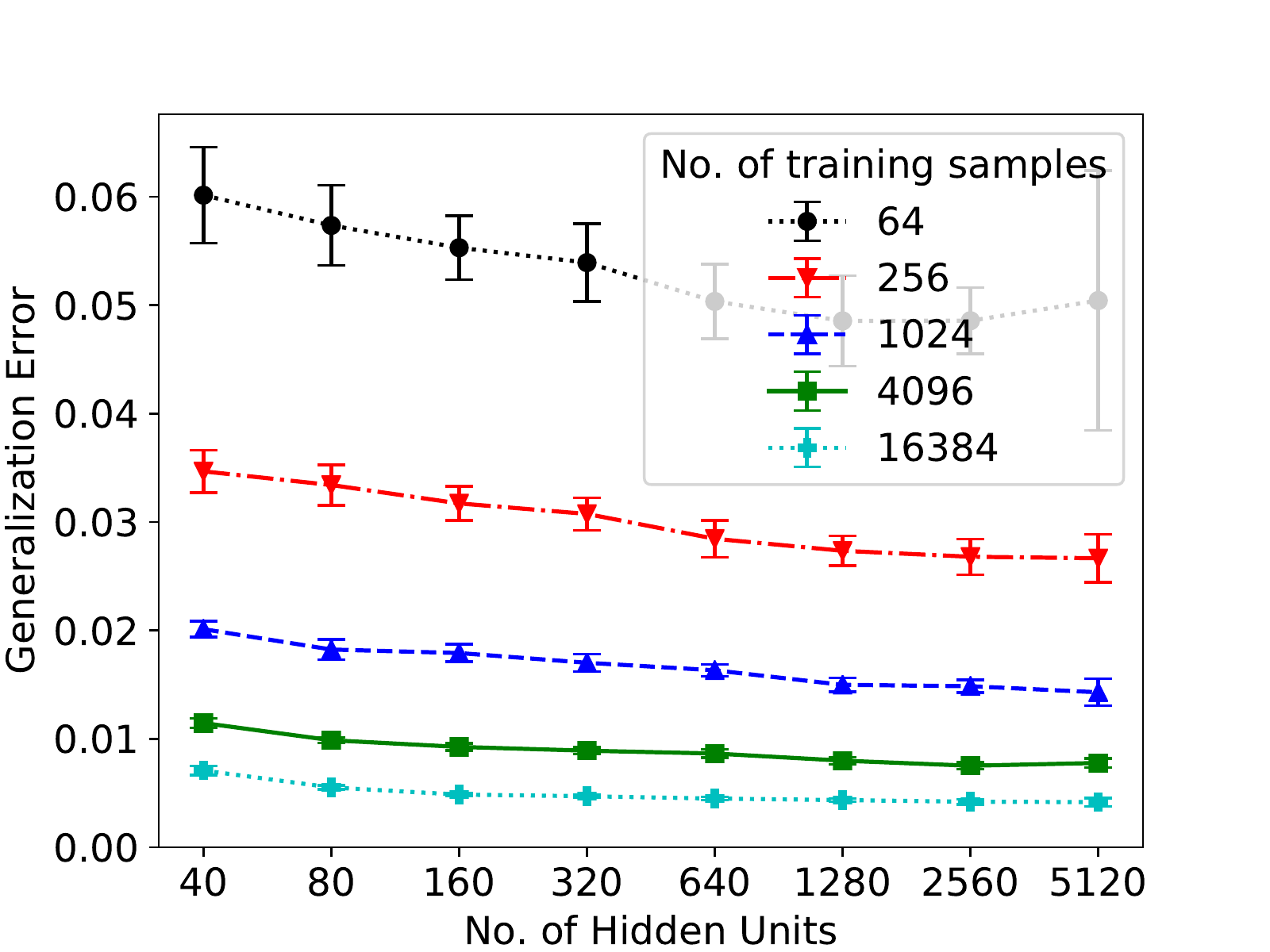}\\
        (a)
    \end{minipage}%
        \begin{minipage}{.5\textwidth}
        \centering
        \includegraphics[scale=0.45]{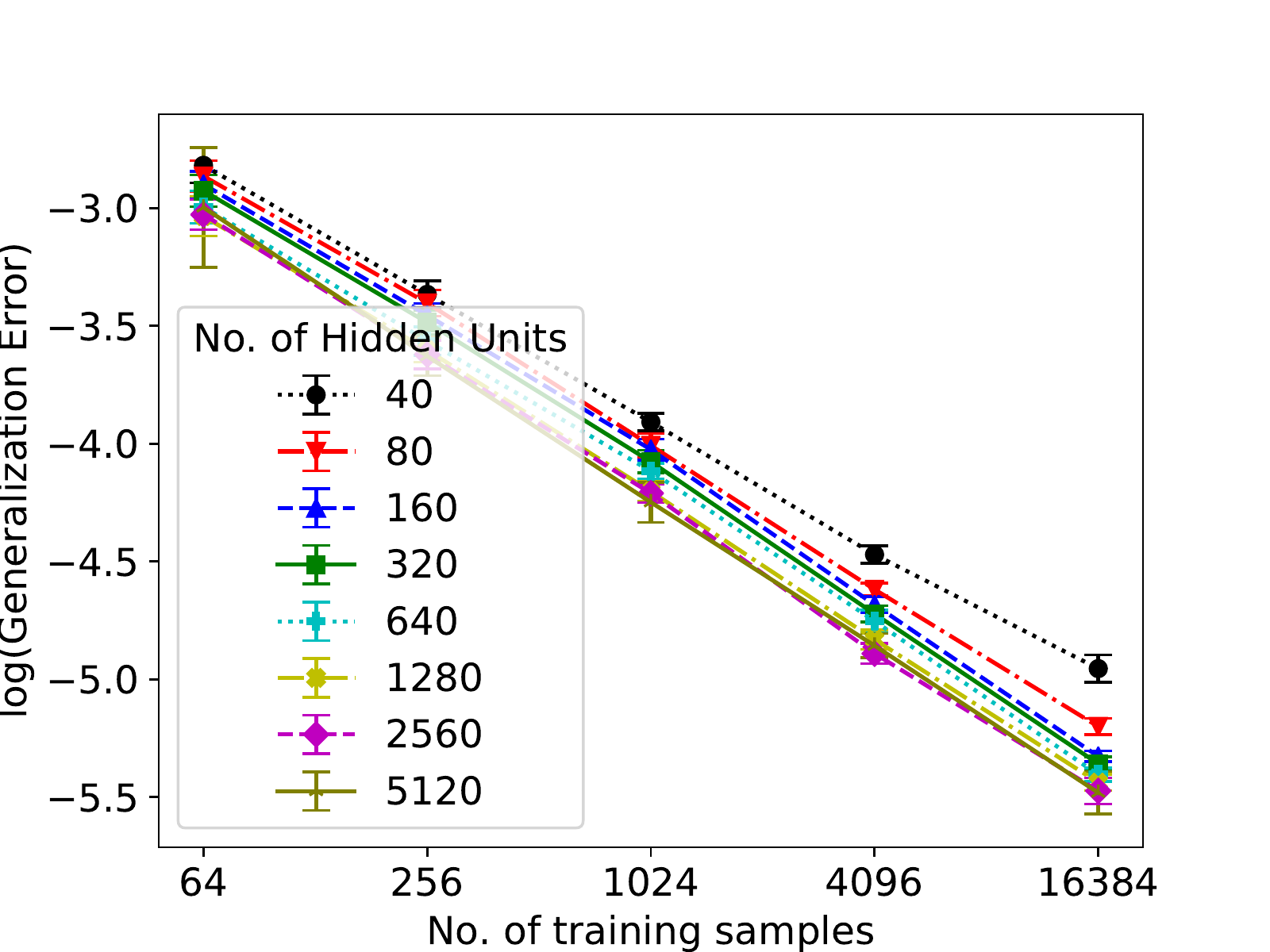}\\
        (b)
    \end{minipage}\hfill
               \begin{minipage}{.5\textwidth}
        \centering
        \includegraphics[scale=0.45]{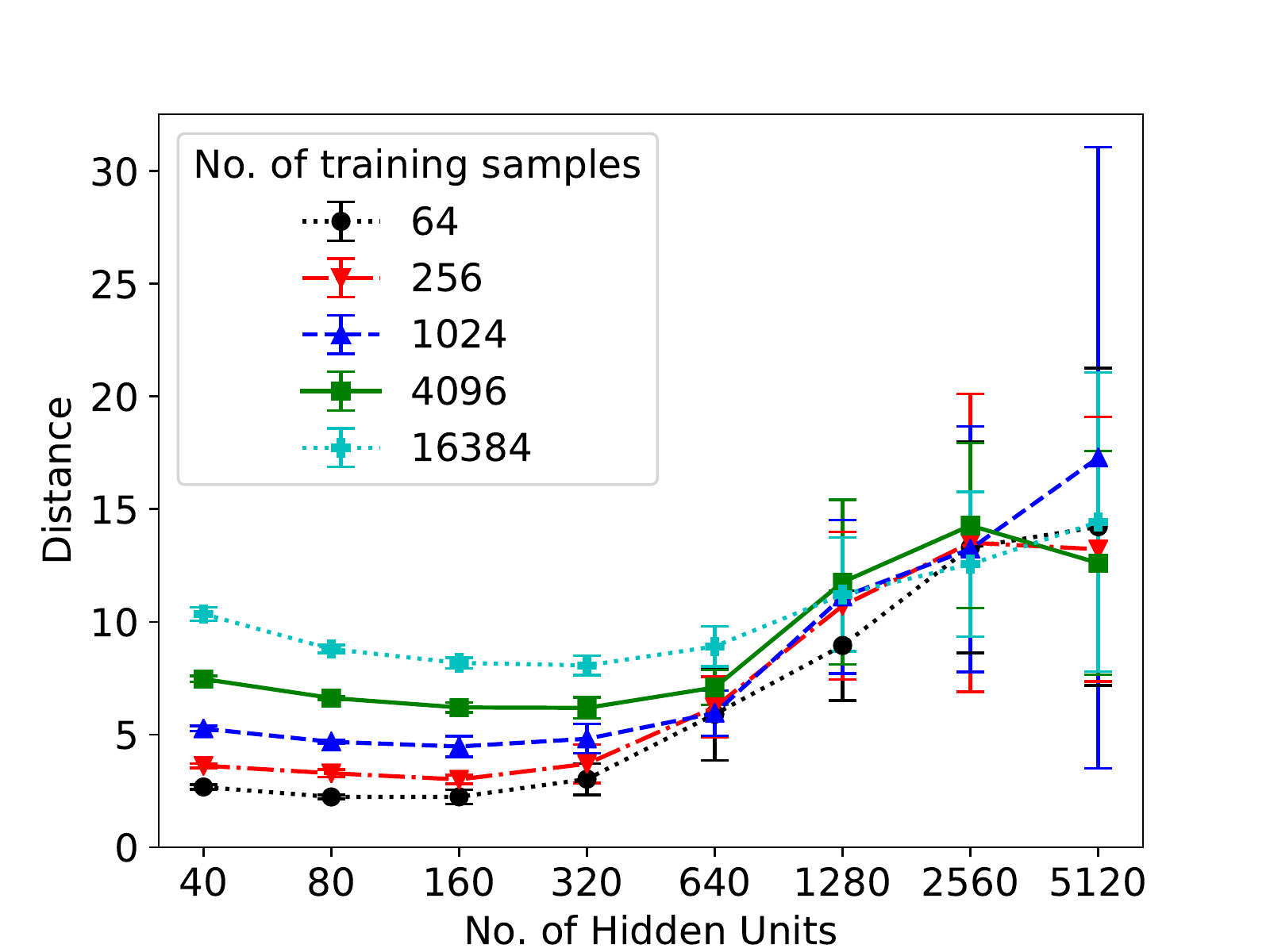} \\
		(c)	
	    \end{minipage}%
        \begin{minipage}{.5\textwidth}
        \centering
        \includegraphics[scale=0.45]{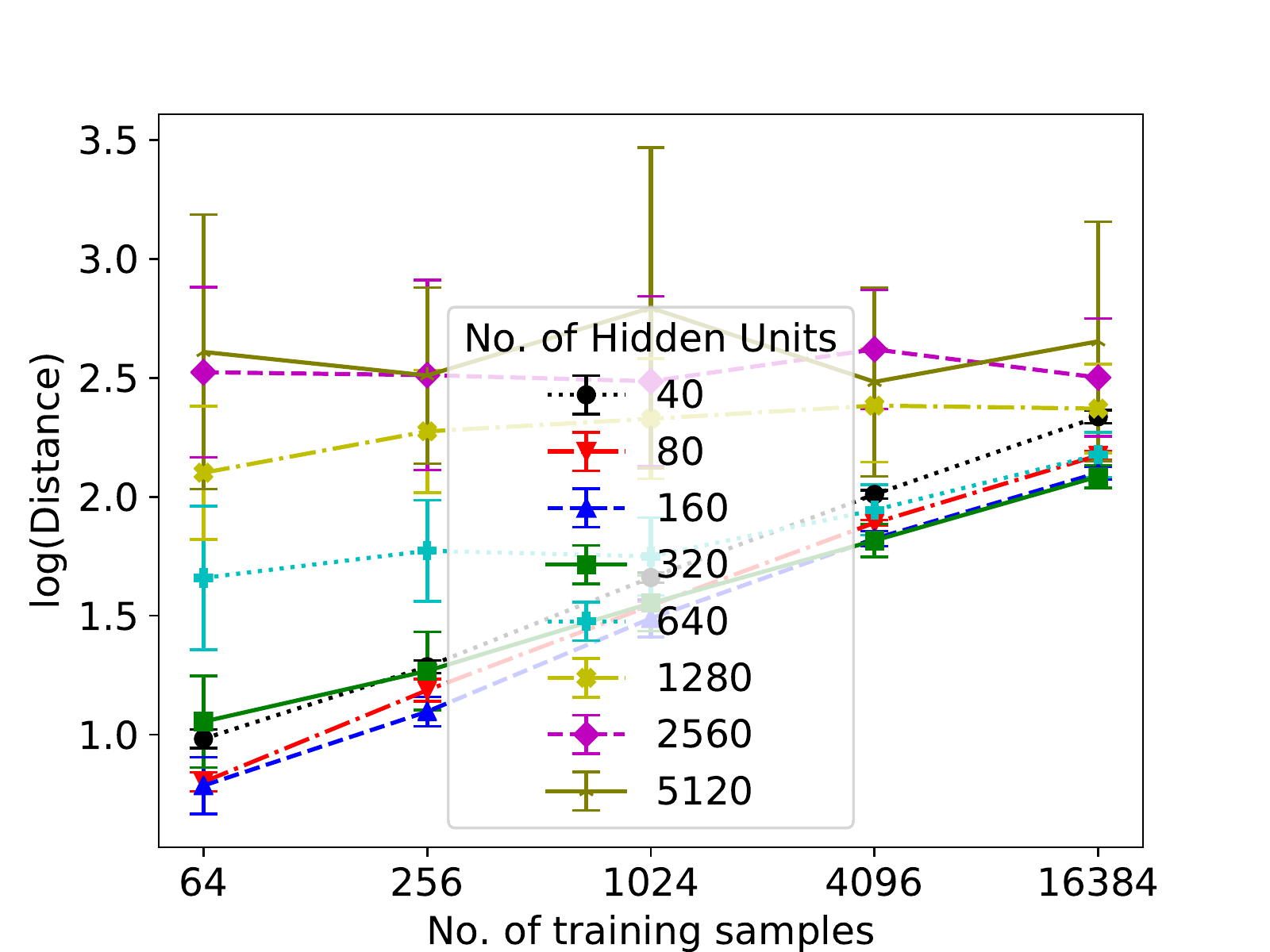}\\
	(d)
	    \end{minipage}\hfill
	\caption{Real-data: Experiment C on MNIST with larger learning rate of $1.0$, without momentum.}
    \label{fig:mnist}
\end{figure}

\subsection{Noisy Labels}
Next, we investigate how the distances vary when the labels of the datapoints are randomly chosen to be $1$ or $-1$ with equal probability independently (see Figure~\ref{fig:noise}).  For MNIST, we train using SGD with learning rate $0.01$ until the loss goes to $0.1$ and for CIFAR, we train using SGD with learning rate $0.001$ until the loss goes to $0.1$. 

\begin{figure}
        \begin{minipage}{.5\textwidth}
        \centering
        \includegraphics[scale=0.45]{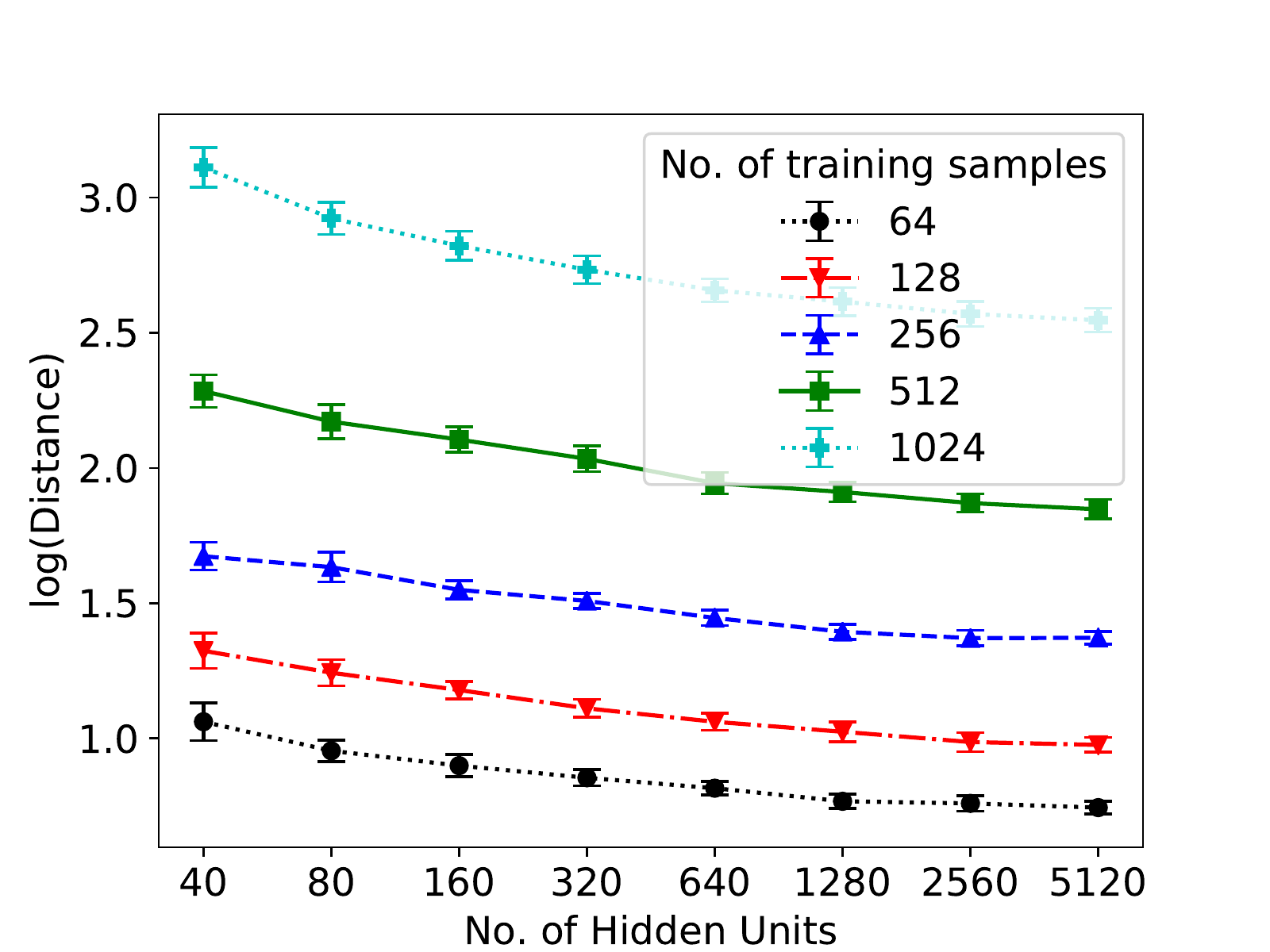}\\
        (a) MNIST
    \end{minipage}%
        \begin{minipage}{.5\textwidth}
        \centering
        \includegraphics[scale=0.45]{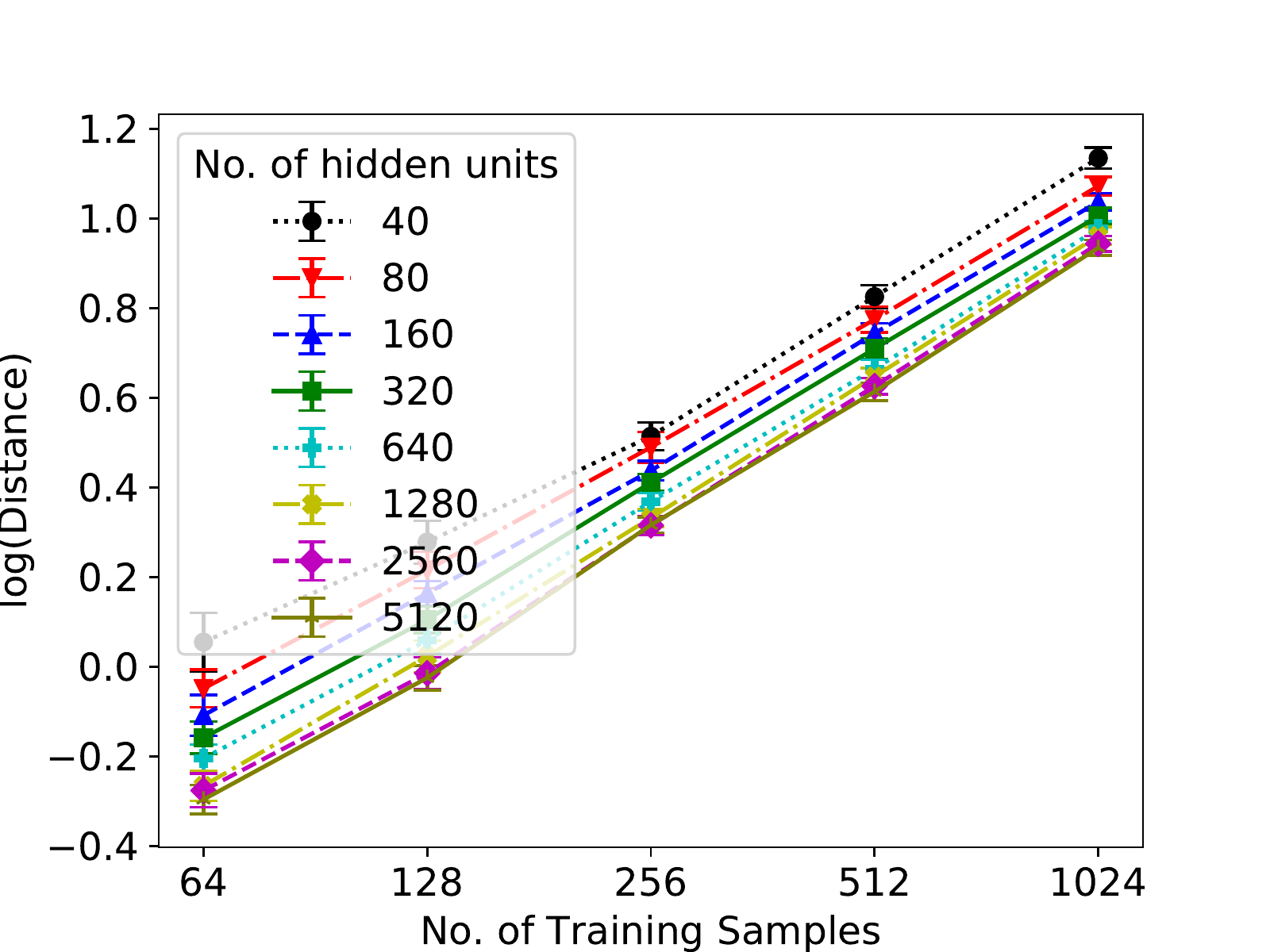}\\
        (b) MNIST
    \end{minipage}\hfill
               \begin{minipage}{.5\textwidth}
        \centering
        \includegraphics[scale=0.45]{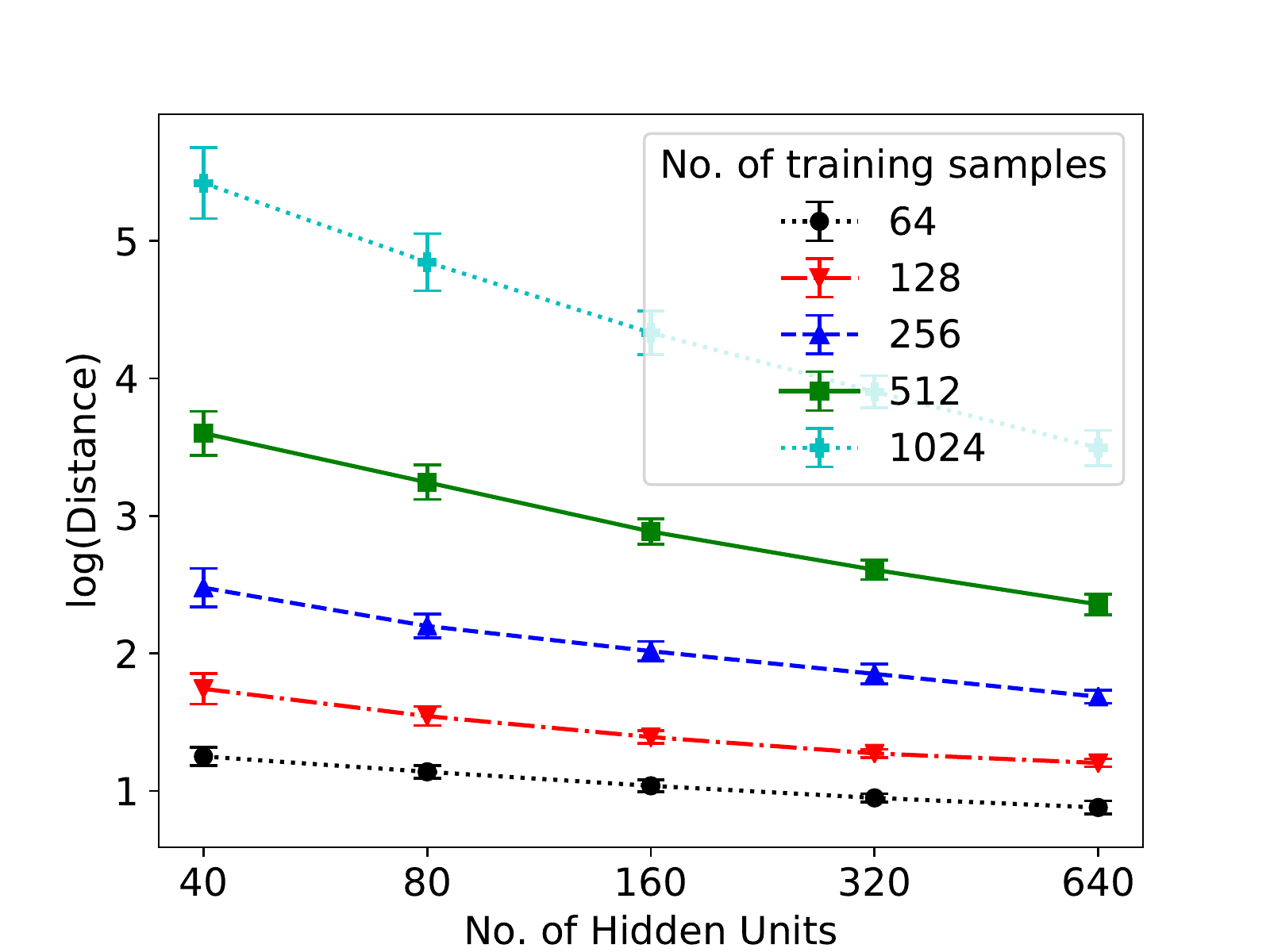} \\
		(c)	CIFAR
	    \end{minipage}%
        \begin{minipage}{.5\textwidth}
        \centering
        \includegraphics[scale=0.45]{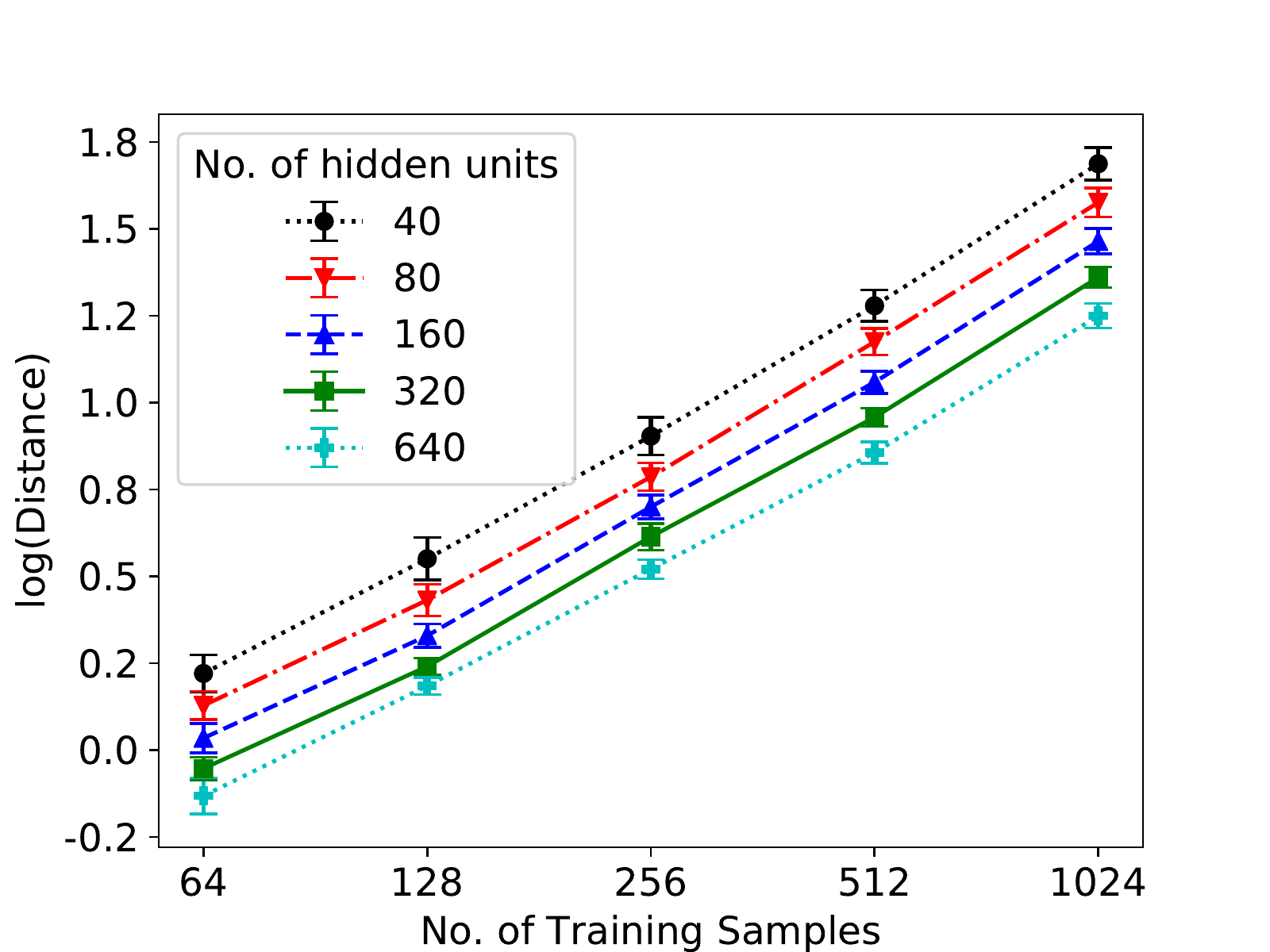}\\
	(d) CIFAR
	    \end{minipage}\hfill
	\caption{Experiments on completely random labels: On the left, we plot distance vs. $H$ for different training dataset sizes. On the right, we plot distance vs. training set size for different widths.}
    \label{fig:noise}
\end{figure}

\subsection{Partially Noisy Labels}
Next, we select datapoints of just two classes in the MNIST dataset, and corrupt a part of its labels randomly. We train using SGD with learning rate $0.01$ until the loss diminishes to $0.1$. We examine how the distance moved from the random initialization varies with the level of noise (see Figure~\ref{fig:varied-noise}).  Note that here the X axis, which is basically the proportion of points that have been corrupted. 

\begin{figure}
        \begin{minipage}{.5\textwidth}
        \centering
        \includegraphics[scale=0.45]{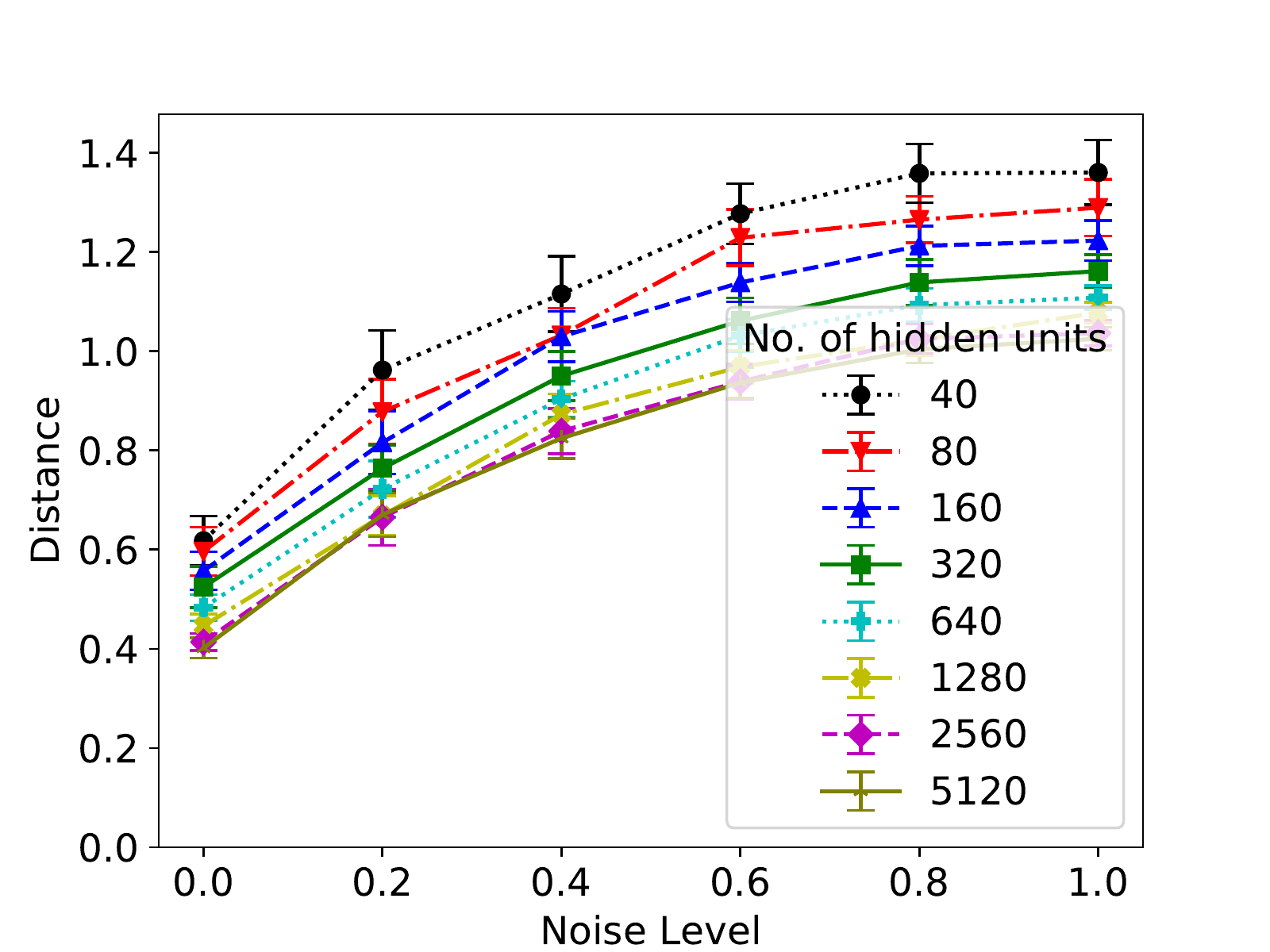}\\
        (a) $m=128$
    \end{minipage}%
        \begin{minipage}{.5\textwidth}
        \centering
        \includegraphics[scale=0.45]{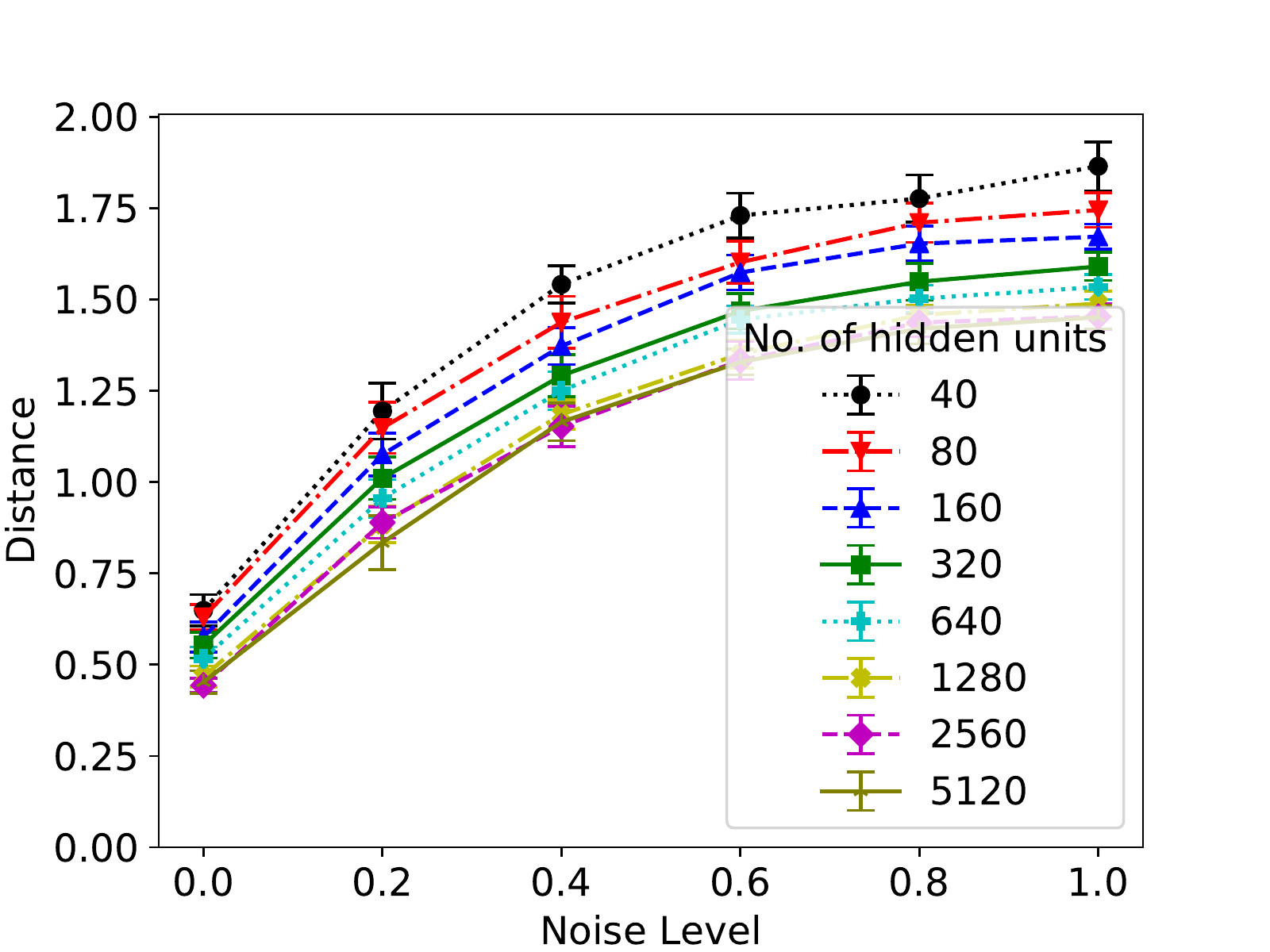}\\
        (b) $m=256$
    \end{minipage}\hfill
               \begin{minipage}{.5\textwidth}
        \centering
        \includegraphics[scale=0.45]{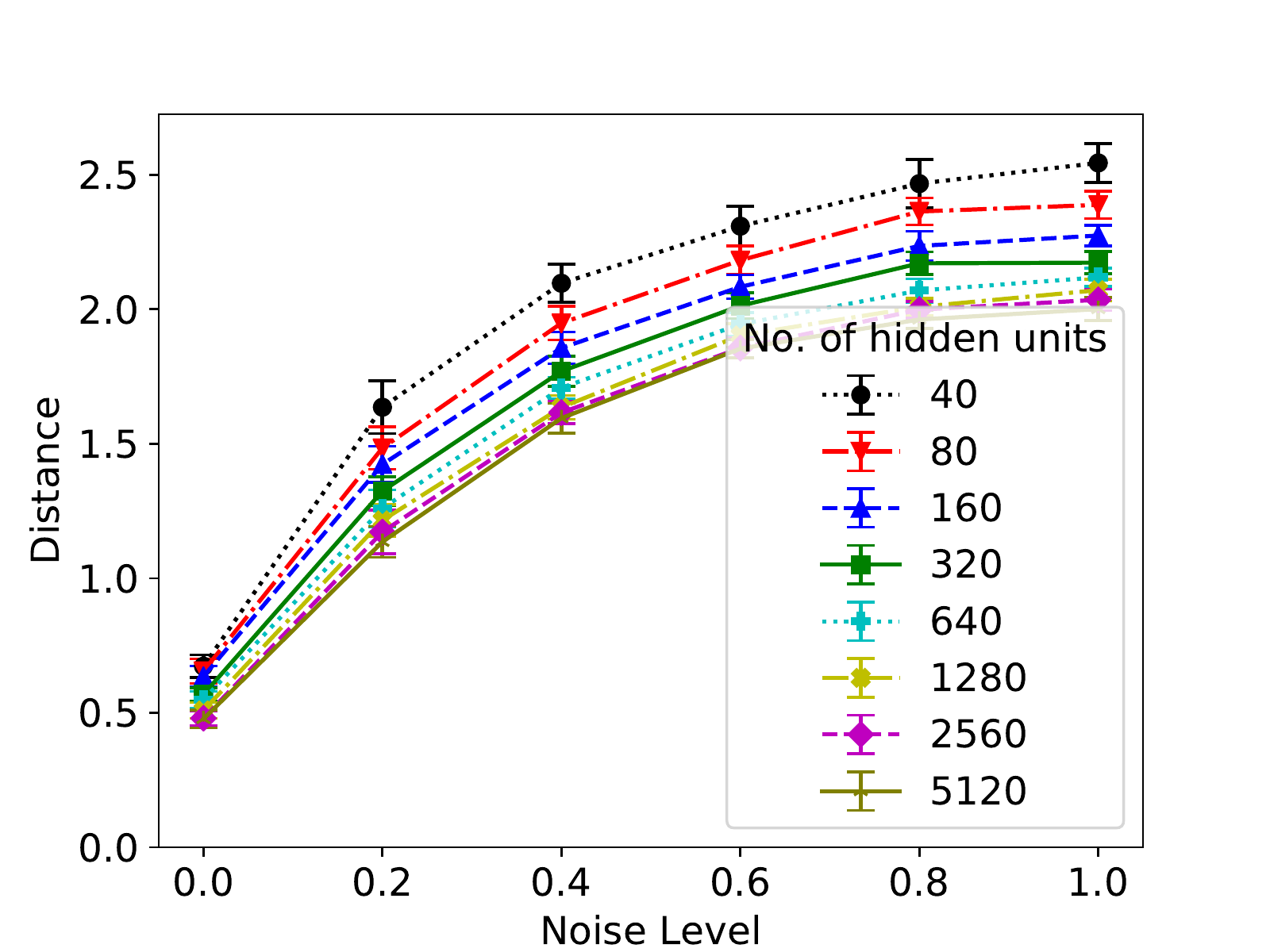} \\
		(c)	$m=512$
	    \end{minipage}%
        \begin{minipage}{.5\textwidth}
        \centering
        \includegraphics[scale=0.45]{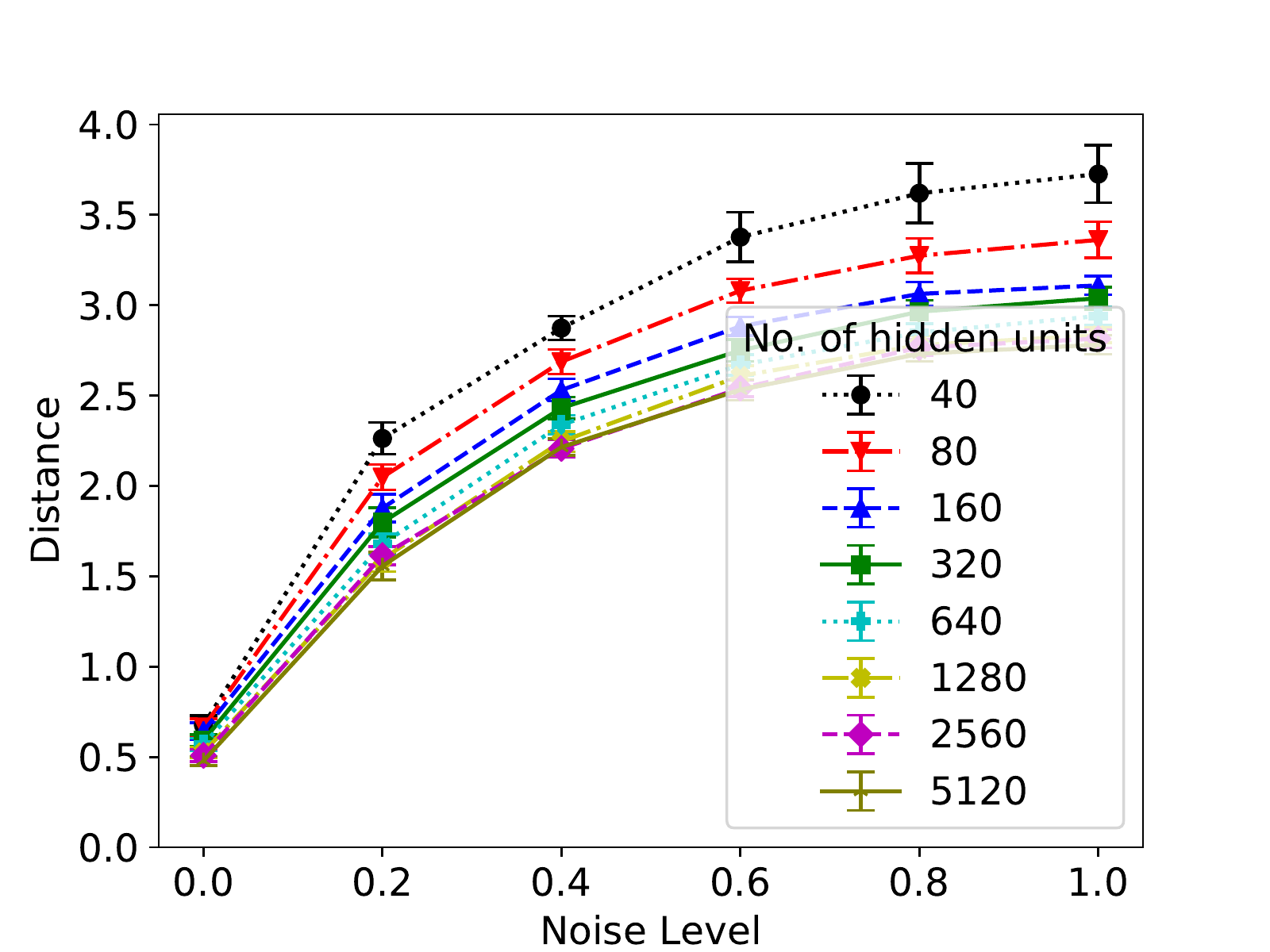}\\
	(d) $m=1024$
	    \end{minipage}\hfill
	\caption{Experiments on partially noisy labels with different levels of noise: For each plot we fix the training data size, and plot distance vs. noise for different values of $H$.}
    \label{fig:varied-noise}
\end{figure}

\end{document}